\documentclass[11pt]{article}

\usepackage[margin=1in]{geometry}
\usepackage{amsmath,amsthm,amsfonts,amssymb}
\usepackage[authoryear]{natbib}
\usepackage[hidelinks]{hyperref}
\usepackage{graphicx}
\usepackage{url}
\usepackage{booktabs}
\usepackage{nicefrac}
\usepackage{microtype}
\usepackage{xcolor}
\usepackage{multirow}
\usepackage{multicol}
\usepackage{array}
\usepackage{bm}
\usepackage{float}
\usepackage{enumitem}
\usepackage{subcaption}

\theoremstyle{plain}

\newtheorem{theorem}{Theorem}[section]

\theoremstyle{definition}

\newtheorem{assumption}[theorem]{Assumption}

\newcommand{\change}[1]{#1}
\newcommand{\mycomment}[1]{}

\newcommand{\Y}{\bm Y}
\newcommand{\y}{\bm y}
\newcommand{\x}{\bm x}
\newcommand{\Z}{\bm Z}
\newcommand{\X}{\bm X}
\newcommand{\z}{\bm z}

\newcommand{\V}{\bm V}

\newenvironment{barticle}[1][]{}{}
\newenvironment{binproceedings}[1][]{}{}
\newenvironment{bbook}[1][]{}{}
\def\endbibitem{}
\newcommand{\AND}{ and }
\newcommand{\bauthor}[1]{#1}
\newcommand{\bsnm}[1]{#1}
\newcommand{\bfnm}[1]{#1}
\newcommand{\binits}[1]{}
\newcommand{\byear}[1]{(#1)}
\newcommand{\btitle}[1]{#1}
\newcommand{\bjournal}[1]{\textit{#1}}
\newcommand{\bvolume}[1]{\textbf{#1}}
\newcommand{\bpages}[1]{#1}
\newcommand{\bpublisher}[1]{#1}
\newcommand{\bbooktitle}[1]{\textit{#1}}

\title{Generative Score Inference for Multimodal Data}
\author{
\begin{tabular}{c}
Xinyu Tian and Xiaotong Shen\\
School of Statistics, University of Minnesota\\
\texttt{tianx@umn.edu}, \texttt{xshen@umn.edu}
\end{tabular}
}
\date{}

\begin{document}
\maketitle

\begin{abstract}
Accurate uncertainty quantification is crucial for making reliable decisions in various supervised learning scenarios, particularly when dealing with complex, multimodal data such as images and text. Current approaches often face notable limitations, including rigid assumptions and limited generalizability, constraining their effectiveness across diverse supervised learning tasks. To overcome these limitations, we introduce Generative Score Inference (GSI), a flexible inference framework capable of constructing statistically valid and informative prediction and confidence sets across a wide range of multimodal learning problems. GSI utilizes synthetic samples generated by deep generative models to approximate conditional score distributions, facilitating precise uncertainty quantification without imposing restrictive assumptions about the data or tasks. We empirically validate GSI's capabilities through two representative scenarios: hallucination detection in large language models and uncertainty estimation in image captioning. Our method achieves state-of-the-art performance in hallucination detection and robust predictive uncertainty in image captioning, and its performance is positively influenced by the quality of the underlying generative model. These findings underscore the potential of GSI as a versatile inference framework, significantly enhancing uncertainty quantification and trustworthiness in multimodal learning.
\end{abstract}

\noindent\textbf{Keywords:} Statistical Uncertainty, Multimodality, Prediction Sets, Generative Models, Trustworthy

\section{Introduction}

Quantifying predictive uncertainty is critical as machine learning models become increasingly complex and are deployed in uncertainty-sensitive domains such as healthcare and finance. While modern deep learning and generative architectures deliver remarkable accuracy, their opaque nature obscures the relationship between inputs and outputs, complicating reliable uncertainty estimation. Compounding this challenge, classical techniques, from asymptotic intervals to bootstrap methods, break down on high-dimensional, unstructured, or multimodal data \citep{dai2022significance,liu2024novel}. As model complexity increases, the risks of overfitting, bias, and lack of reproducibility grow \citep{gibney2022ai}, underscoring the need for advanced uncertainty quantification techniques to address these challenges.

Uncertainty quantification for multimodal data is still in its infancy. By contrast, uncertainty quantification
in tabular regression and classification already benefits from mature, distribution-free guarantees via conformal prediction \citep{vovk2005algorithmic,romano2019conformalized}. Multimodal tasks, however, introduce intricate inter-modal dependencies, heterogeneous noise sources, and vast latent spaces that magnify the hurdles of classical uncertainty
quantification \citep{baltrusaitis2019multimodal}. Although recent specialized approaches, including calibration-based confidence estimation \citep{guo2017calibration, minderer2021revisiting, dai2022significance}, Semantic Entropy \citep{farquhar2024detecting}, and the eigenvalue-based LLM (Large Language Model)-check method \citep{sriramanan2024llm} for hallucination detection, and Conformal Alignment for selection tasks \citep{gui2024conformal}, address specific multimodal issues, a unified, theoretically sound framework applicable broadly to multimodal data remains elusive.

\change{Next-token prediction, the core mechanism of modern large language models, significantly intensifies the challenge of uncertainty quantification. In this framework, the model generates a response sequentially, where each new word or character is predicted based on the initial prompt and the model’s own previously generated sequence. Because the model conditions its output on its own earlier choices rather than on an objective ground-truth continuation, small errors in the initial steps of decoding can propagate and accumulate. This results in responses that remain linguistically fluent and coherent but eventually drift away from the true data distribution, leading to factually unreliable hallucinations. Consequently, traditional uncertainty quantification tools—often designed for simpler, low-dimensional tasks—cannot adequately capture the complexities of these high-dimensional, branching response distributions. Effective quantification must instead account for the uncertainty inherent in the entire generated sequence rather than focusing on individual points or marginal averages.}

In this article, we introduce Generative Score Inference (GSI), a novel framework for uncertainty quantification that constructs prediction sets across diverse tasks, including tabular data, unstructured data, deep learning models, transformers, and selection scenarios. GSI utilizes generative models, such as diffusion models \citep{ho2020denoising} and normalizing flows \citep{rezende2015variational}, to generate high-fidelity synthetic samples that approximate the conditional score distribution---a key measure of predictive performance. This approach enables precise tail probability estimation for uncertainty quantification in complex data settings. By constructing prediction sets or intervals based on these synthetic samples' percentiles, GSI ensures statistical validity with rigorous coverage guarantees, enabling its application to high-dimensional, multimodal, and distributionally complex tasks. This approach becomes a powerful tool for advancing uncertainty quantification in multimodal contexts. 

Empirically, we apply the GSI framework to several representative applications and demonstrate that it outperforms leading methods in the literature across three challenging domains: (1)  
Tabular prediction. Across real-world regression benchmarks, GSI yields narrower intervals with guaranteed coverage, outperforming state-of-the-art conformal methods \citep{shafer2008tutorial,alaa2023conformalized} (Section \ref{tabular}); (2) Hallucination 
detection in large language model (LLM) outputs. The importance of ensuring the trustworthiness of text generated by large language models is underscored by the Nature paper. In Q\&A tasks, GSI surpasses Semantic Entropy \citep{farquhar2024detecting} by more precisely capturing deviations between generated and human-verified text (Section \ref{hallucinations}); (3) Image selection for captioning. On the MS-COCO dataset, GSI exhibits greater statistical power than Conformal Alignment \citep{gui2024conformal} when identifying images that a vision--language model (VLM) can caption reliably (Section~\ref{captioning}).

Our contributions are threefold:

\begin{enumerate}[leftmargin=1.2em]
\item \textbf{A unified framework of uncertainty quantification for multimodal data:} 
We introduce GSI as a general-purpose methodology that constructs prediction sets by estimating conditional score distributions of the response given predictors via generative models. By leveraging Monte Carlo sampling on synthetic score data from a trained generative model, GSI estimates conditional score distributions to construct uncertainty measures. GSI accommodates diverse data modalities (tabular, images, text) and generative model families (diffusion, normalizing flows, autoregressive models), enabling principled and flexible uncertainty quantification across complex data types and prediction tasks.

\item \textbf{Theoretical guarantees:} We establish asymptotic conditional coverage guarantees and provide explicit error bounds for GSI (Section~\ref{sec: theory}). These theoretical results form a rigorous foundation for generative-model-based uncertainty quantification and generative-model-based inference.

\item \textbf{Comprehensive empirical validation and benchmarking:} We evaluate GSI across multiple challenging prediction scenarios, including 
hallucination detection in LLM outputs, image captioning, and tabular regression, and demonstrate clear empirical advantages over existing state-of-the-art methods. In all examined domains, GSI achieves superior statistical power, tighter prediction intervals, and reliable coverage properties, underscoring its effectiveness and broad applicability.
\end{enumerate}

The structure of this article is as follows: Section~\ref{sec: gl} introduces the GSI framework for constructing prediction sets. Section~\ref{sec: theory} presents the statistical guarantees for uncertainty quantification in point prediction. Section~\ref{numerical} compares the performance of GSI with state-of-the-art uncertainty quantification methods in real data applications. Section~\ref{discussion} discusses the methodology and concludes the article. The Supplementary Material includes technical details, with Appendix~A providing proofs, Appendix~B examining diffusion models for conditional generation and analyzing generation error for statistical guarantees, and Appendix~C presenting experimental details.

\section{Generative score inference}
\label{sec: gl}
\subsection{Uncertainty quantification in multimodal tasks}

Uncertainty quantification is essential for multimodal learning systems, where inputs and outputs may span text, images, or their combinations. Among these settings, question answering and image captioning serve as two representative examples for examining uncertainty in large language models and vision--language models. In question-answering tasks, the system receives a user-provided query and returns one or more candidate answers, while image captioning systems take an image as input and generate a natural-language description as output. Despite recent advances, these models frequently exhibit hallucinations---producing confident but factually incorrect responses. As illustrated in Figure~\ref{fig:qa_ic_overview}, the answers generated for the same query may vary substantially across samples, highlighting the inherent uncertainty in their predictions. Entropy-based hallucination detection methods \citep{farquhar2024detecting} build on this observation: when a model is uncertain, it is more likely to hallucinate. However, our experimental results show that hallucinations may still arise even when the model appears highly confident in its response.

The central challenge underlying these applications is determining when we can trust model outputs. Because reference answers or ground-truth captions are typically unavailable at inference time, assessing correctness on a per-example basis is nontrivial. We therefore cast this problem as one of statistical prediction and inference, aiming to quantify uncertainty and provide principled confidence assessments for model-generated outputs.
\begin{figure}[htbp]
    \centering
    \begin{subfigure}{0.9\linewidth}
        \centering
        \includegraphics[width=\linewidth]{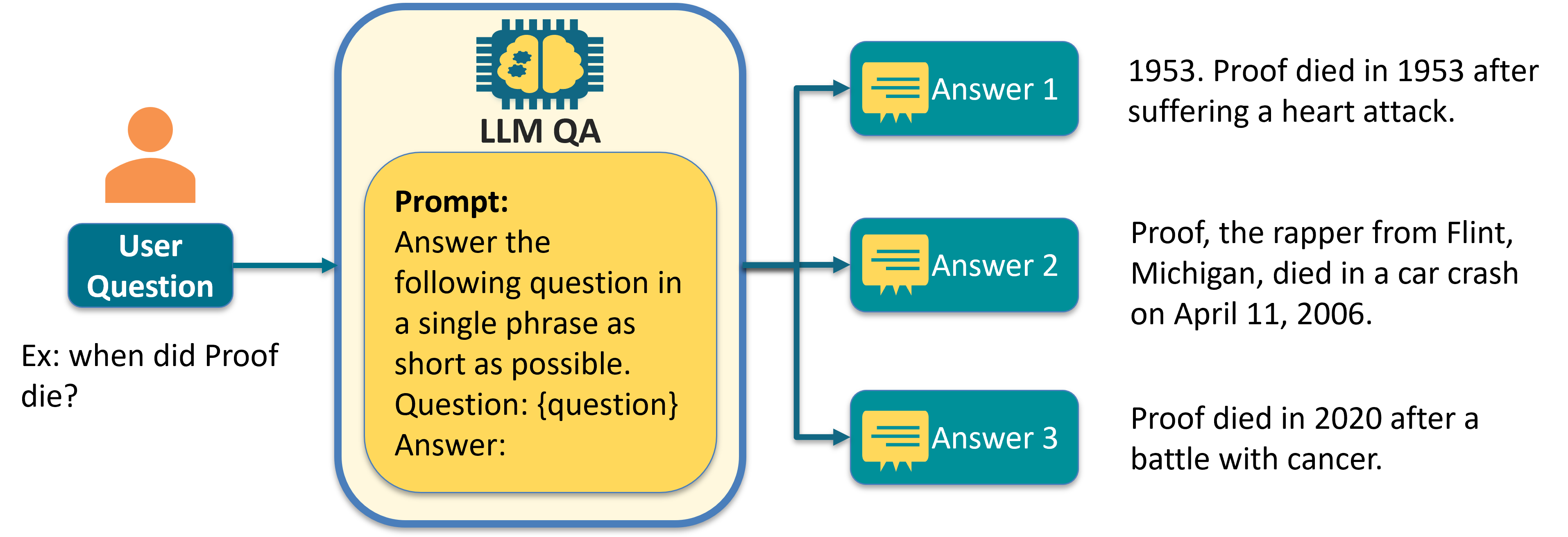}
        \caption{Question--answering module.}
        \label{fig:qa}
    \end{subfigure}

    \vspace{0.8em}

    \begin{subfigure}{0.9\linewidth}
        \centering
        \includegraphics[width=\linewidth]{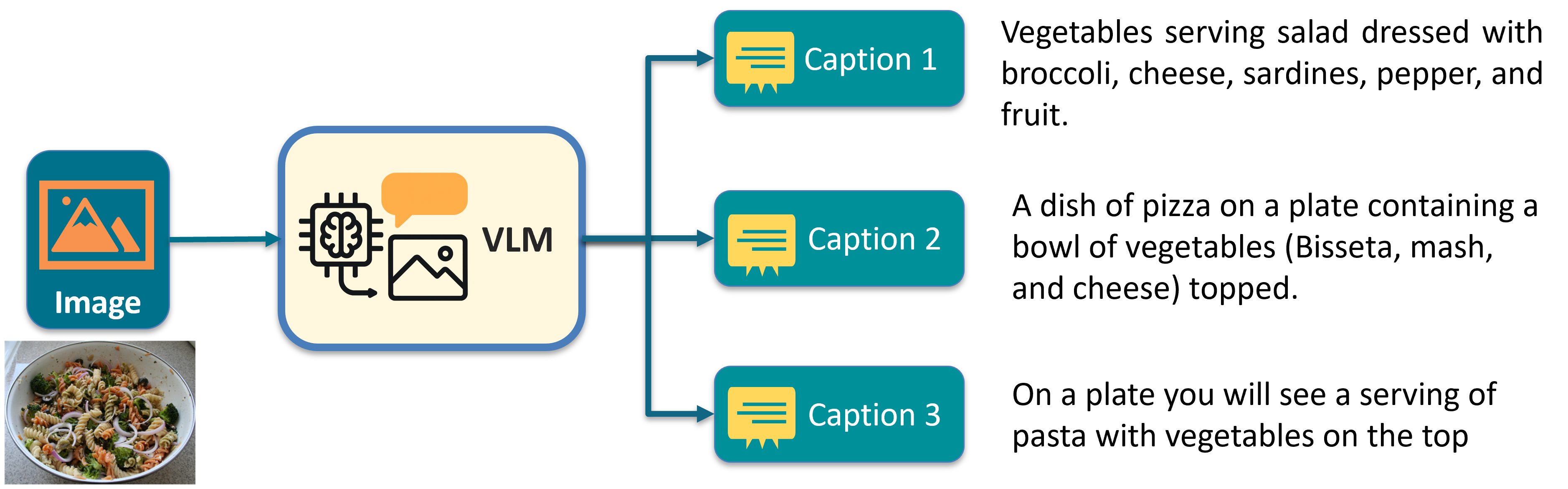}
        \caption{Image captioning module.}
        \label{fig:ic}
    \end{subfigure}

    \caption{Overview of (a) the QA system and (b) the image captioning model.}
    \label{fig:qa_ic_overview}
\end{figure}

\subsection{Prediction sets}
Consider a basic prediction model of the form $\hat{y} = \hat{f}(x)$, where $x$ denotes the predictor and $\hat{f}$ the fitted prediction function. Beyond producing a point estimate, our goal is to construct a prediction set $\mathcal{C}_{\alpha}(x)$ such that the response $y$ lies within this set with high probability. Building on this classical prediction framework, Generative Score Inference provides a model-agnostic approach to uncertainty quantification that is well-suited to multimodal supervised learning.
For a predictor--response pair $(\X,\Y)$,
GSI constructs a prediction set $\mathcal{C}_{\alpha}(\x)$ that guarantees conditional coverage
\[
P_{\y|\x}(\Y \in \mathcal{C}_{\alpha}(\x)) \geq 1 - \alpha, \quad 0 < \alpha<1, 
\]
where $P_{\y|\x}$ is the conditional probability on $\X$, and $1 - \alpha$ is the confidence level. 
The vectors $\X$ and $\Y$ may represent tabular features, images, text, or any combination of heterogeneous data modalities. For example, in the QA setting, 
$\X$ corresponds to the question and 
$\Y$ denotes the answer.

To construct the prediction set, we utilize a score function $s(\y,\hat{\y})$ that quantifies the discrepancy between the observed value $\y$ and the predicted value $\hat{\y}$, leveraging sample splitting. In image captioning, $s(\y, \hat{\y})$ 
could represent metrics such as the aggregate cross-entropy loss or ROUGE-L dissimilarity score (Recall-Oriented Understudy for Gisting Evaluation -- Longest Common Subsequence variant) computed across all word positions. These metrics effectively capture word-by-word differences between predicted and actual captions, spanning multiple sentences \citep{goodfellow2016deep, dai2022coupled}. In tabular regression, $s(\y,\hat{\y})$ could correspond to a measure like the $L_1$-norm. This framework provides a unified methodology for uncertainty quantification across diverse data modalities.

\begin{figure}[h]
\centering
\includegraphics[width=.9\linewidth]{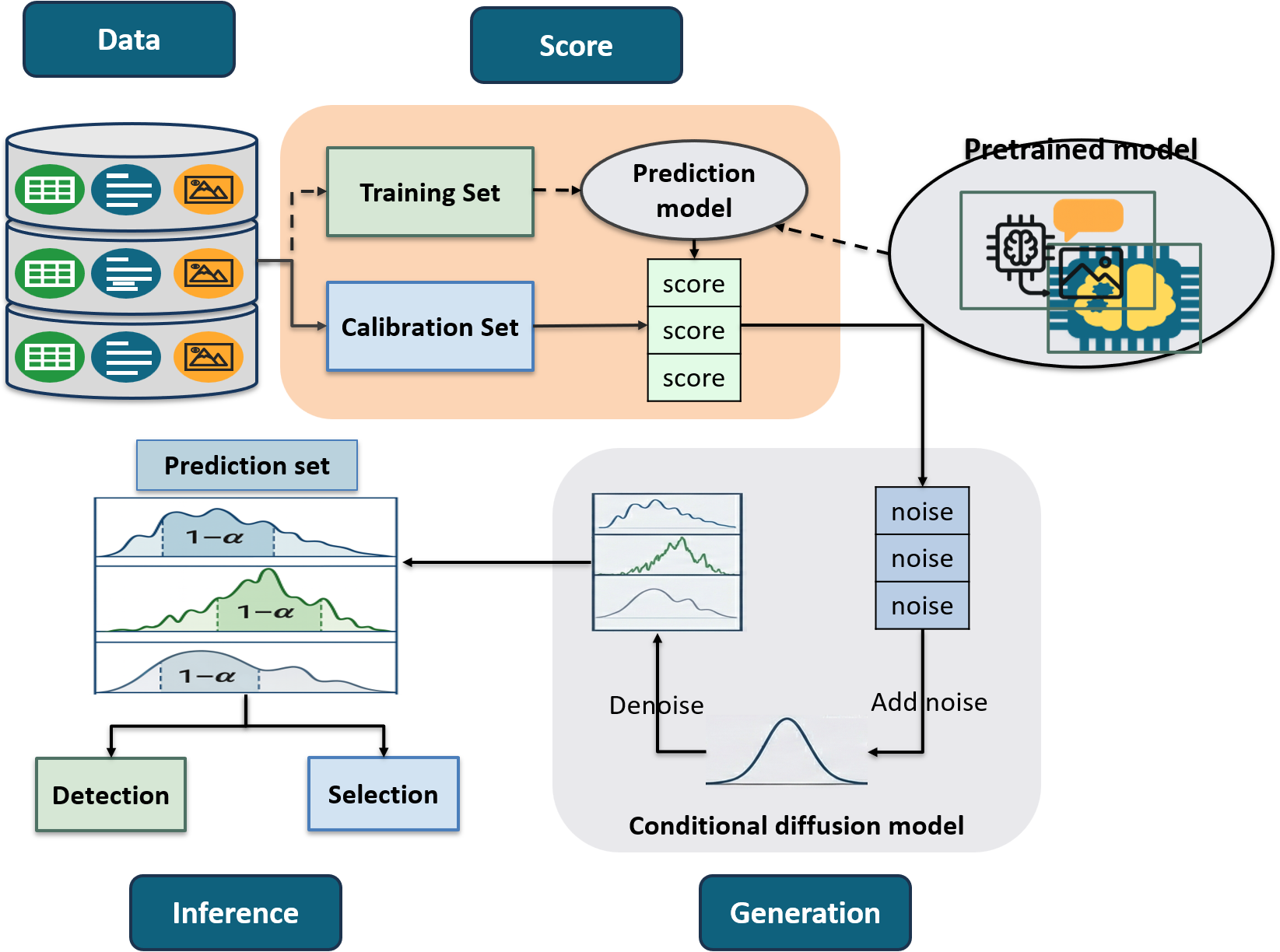}
\caption{Pipeline of GSI method.}
\label{fig_pip}
\end{figure}

Critically, the conditional score distribution encapsulates the uncertainty inherent in the prediction and additional sources of uncertainty, such as those arising from hyperparameter tuning and feature selection---factors often neglected by traditional methods like asymptotic approaches.

Let $q_{1-\alpha}$ represent the $(1-\alpha)$th population quantile of the score distribution. Then, we define the prediction set as:
\[
\mathcal{C}_{\alpha}(\x) = \{ \y : s(\y, \hat f(\x)) \leq q_{1-\alpha} \},
\]
where $\hat{\y}=\hat f(\x)$ denotes the predicted value from the learned prediction function 
$\hat f$ based on the predictor $\x$ and data.

\change{In cases where repeated realizations of $s$ given $\mathbf{x}$ are unavailable, we estimate the score quantile by fitting a conditional generator to the distribution $P_{s\mid \mathbf{x}}$ and then drawing a synthetic sample $\{\tilde{s}_i(\mathbf{x})\}_{i=1}^m$ to compute the empirical $(1-\alpha)$ upper quantile. The GSI framework itself is agnostic to the generator family and can, in principle, be paired with any sufficiently accurate conditional generator. In the present implementation, we use conditional diffusion models because they offer a favorable balance of fidelity, training stability, and flexibility for the heterogeneous score distributions arising in our applications. They are particularly attractive for conditional laws that may be non-Gaussian, heteroscedastic, or multimodal, and they often provide more stable training and better mode coverage than GAN-based methods. Compared with normalizing flows, they avoid invertibility constraints, although flows may offer exact likelihood evaluation and faster sampling. Appendix~C shows empirically that, among the generators considered here, conditional diffusion gives the most stable overall tradeoff between coverage and interval efficiency, while Appendix~B provides the diffusion-specific theory used in our implementation.}

Figure~\ref{fig_pip} illustrates the overall pipeline of GSI.
The GSI method consists of the following steps:

\begin{enumerate}
\item[Step 1] \textbf{Sample splitting and model training.} Randomly split the original data $\mathcal{D} = \{ (\x_i, \y_i) \}_{i=1}^n$ into training and calibration subsets by ratio $1-\rho$ and $\rho$ with a constant 
$0<\rho<1$. Using the training data, we train a prediction model to yield
$\hat{\y} = \hat f(\x)$.

\item[Step 2] \textbf{Score computation.} On the calibration subset, compute the scores $s(\y_i, \hat f(\x_i))$ to obtain the score set $\mathcal{D}_s = \{ (\x_i, s(\y_i, \hat f(\x_i))) \}_{i=1}^{n_s}$, where $n_s=\lfloor n\rho\rfloor$ is the calibration sample size.

\item[Step 3] \textbf{Generative model training.} Train a generative model on $\mathcal{D}_s$ to learn the conditional score distribution $\hat{P}_{s|\x}$. Given a new input $\x_{\text{new}}$, the model estimates $\hat{P}_{s|\x_{\text{new}}}$.

\item[Step 4] \textbf{Prediction set.} Compute the empirical $(1-\alpha)$-upper quantile $q_{1-\alpha,m}$ based on a synthetic sample $\{ \tilde{s}_j \}_{j=1}^m$ from the trained score generative model at $\x = \x_{\text{new}}$.
Define the prediction set:
\[
\mathcal{C}_{\alpha}(\x_{\text{new}}) = \{ \y : s(\y, \hat f(\x_{\text{new}})) \leq q_{1-\alpha,m} \}.
\]
\end{enumerate}

It is worth noting that when a pretrained predictor 
$\hat{f}$
 is available, such as pretrained LLM or VLM, the initial model-training step is unnecessary. In this case, all available data can be devoted to computing the conformity scores and training the generative model for the conditional score distribution.

\textbf{Applications of GSI in Hallucination detection:} 
\change{This framework applies naturally to hallucination detection in LLMs. At deployment time, the reference answer $\Y$ for a new question $\x_{\text{new}}$ is unavailable, so the relevant score distribution must be learned offline from historical or curated question--answer pairs with verified references. We begin by defining a score function that measures semantic dissimilarity between the model output and the expected answer. If a response is declared hallucinated whenever $s(\Y,\hat f(\x_{\text{new}})) > c$, then the problem can be formulated as the following hypothesis test.}
Specifically, consider
\[
H_0:\; s(\Y,\hat f(\x_{\text{new}})) \le c 
\qquad \text{versus} \qquad
H_1:\; s(\Y,\hat f(\x_{\text{new}})) > c .
\]

The prediction set produced by GSI can be employed for this hypothesis test. In practice, we reject $H_0$ if the threshold $c$ is not contained in the prediction set $\mathcal{C}_{\alpha}(\x_{\text{new}})$. Alternatively, one may estimate the tail probability
\[
\hat{P}_{s \mid \x}\{\,s(\Y, \hat f(\x)) > c\,\}
\]
and compare it directly to the significance level $\alpha$.
Hallucinatory outputs yield larger scores, leading to higher tail probabilities and thus rejection of \(H_0\).

More comprehensive demonstrations of this approach for detection and selection problems are provided in Sections~\ref{hallucinations} and~\ref{captioning}.

\textbf{Connection with conformal prediction:} 
Both conformal prediction and Generative Score Inference provide formal guarantees for prediction sets, but they differ in flexibility and precision. Classical conformal prediction constructs a prediction set by taking the \((1-\alpha)\)-quantile of nonconformity scores on a calibration set, thereby ensuring marginal coverage under the exchangeability assumption \citep{shafer2008tutorial}. However, this guarantee is unconditional over the calibration data and does not translate to conditional coverage. Although recent extensions of conformal prediction aim to achieve approximate conditional coverage, they are typically confined to tabular prediction tasks \citep{jung2023batch,alaa2023conformalized}. 
By contrast, GSI leverages generative models to estimate the full conditional distribution of the score function. This capability makes GSI particularly well-suited for high-dimensional or unstructured data, where direct quantile estimation may be less reliable. While GSI can produce tighter and better-calibrated prediction sets, it incurs higher computational costs due to sampling or density estimation in complex spaces. When resources permit, GSI's direct modeling of the conditional score distribution more accurately captures the variability of predicted outcomes in complex tasks.

\section{Statistical guarantee}
\label{sec: theory}

\subsection{\change{Conditional coverage}}

Recall that $s = s(\y, \hat{f}(\x_{\text{new}}))$, where $\hat{f}(\x_{\text{new}})$ represents the predicted value.
Let $\mathrm{TV}(P_{s|\x_{\text{new}}}, \hat{P}_{s|\x_{\text{new}}})$ denote the total variation distance between the true distribution $P_{s|\x_{\text{new}}}$ and the synthetic distribution $\hat{P}_{s|\x_{\text{new}}}$, generated 
by a generative model.

\begin{assumption}[Generation error]
\label{A_generator}
\change{For any tolerance error $\tau>0$, the probability of} 
$
\mathrm{TV}(P_{s|\x_{\text{new}}}, \hat{P}_{s|\x_{\text{new}}}) \leq \tau
$
is greater than $1 - \beta(\tau, n_s)$, where $\beta(\tau, n_s) \to 0$ as $n \to \infty$, and $n_s = \rho n$. 
\end{assumption}

This condition describes the impact of the accuracy of conditional generation. A strong generator can accurately capture the score distribution even when the predictors are less predictive of the outcome. Conversely, highly predictive features can simplify the score distribution. In both cases, we demonstrate that GSI retains desired inferential properties when the score distribution is well-estimated through conditional generation. \change{In deep generative settings, the error term $\tau$ can reflect finite-sample estimation error, imperfect mode coverage, approximation error in the reverse diffusion process, optimization error in overparameterized neural networks, or mismatch between the chosen architecture and the true conditional score law. These practical sources of error motivate the additional validation step used in our experiments to reduce residual finite-sample bias.}

The function $\beta(\tau, n_s)$ can be derived for diffusion and normalizing flow models, as demonstrated by \citep{tian2024enhancing}. The Appendix provides a detailed derivation of $\beta(\tau, n_s)$ for diffusion models, employing generation accuracy theory.

\begin{theorem}[GSI's conditional coverage]
\label{thm1}
Under Assumption \ref{A_generator}, for any Monte Carlo tolerance error 
$\varepsilon > 0$ and generation tolerance error $\tau > 0$, the conditional probability coverage error of 
prediction sets given $\x_{\text{new}}$ is bounded by:
\[
|P_{\y|\x_{\text{new}}}(\Y_{\text{new}} \in \mathcal{C}_{\alpha}(\x_{\text{new}})) - (1-\alpha)| \leq \varepsilon + \tau,
\]
with probability at least $1 - 2\exp(-2m\varepsilon^2) - \beta(\tau, n_s)$. Consequently,
\[
\lim_{\tau, \varepsilon \to 0} \lim_{n,m \to \infty} P_{\y|\x_{\text{new}}}(\Y_{\text{new}} \in \mathcal{C}_{\alpha}(\x_{\text{new}})) = 1 - \alpha,
\]
with probability tending to one.
\end{theorem}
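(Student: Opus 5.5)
We condition throughout on the training data, so that $\hat f$ is fixed, and on $\x_{\text{new}}$. Write $F(t)=P_{s|\x_{\text{new}}}(s\le t)$ for the true conditional CDF of the score, $\hat F$ for the CDF of the generator's distribution $\hat P_{s|\x_{\text{new}}}$, and $\hat F_m$ for the empirical CDF of the synthetic sample $\{\tilde s_j\}_{j=1}^m$. By definition of $\mathcal{C}_\alpha(\x_{\text{new}})$, the coverage probability equals $F(q_{1-\alpha,m})$. The plan is to decompose
\[
F(q_{1-\alpha,m})-(1-\alpha)=\bigl[F(q_{1-\alpha,m})-\hat F(q_{1-\alpha,m})\bigr]+\bigl[\hat F(q_{1-\alpha,m})-(1-\alpha)\bigr]
\]
and control each bracket on its own high-probability event.

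\textbf{First bracket (generation error).} Total variation dominates the difference of probabilities of any event, in particular half-lines $\{s\le t\}$. Hence $\sup_t|F(t)-\hat F(t)|\le \mathrm{TV}(P_{s|\x_{\text{new}}},\hat P_{s|\x_{\text{new}}})$. By Assumption~\ref{A_generator}, this is at most $\tau$ on an event of probability at least $1-\beta(\tau,n_s)$. The bound holds uniformly in $t$, so the random quantile $q_{1-\alpha,m}$ may be plugged in with no extra cost.

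\textbf{Second bracket (Monte Carlo error).} Conditional on the fitted generator, the $\tilde s_j$ are i.i.d.\ from $\hat F$. The Dvoretzky--Kiefer--Wolfowitz inequality with Massart's constant gives $\sup_t|\hat F_m(t)-\hat F(t)|\le\varepsilon$ with probability at least $1-2\exp(-2m\varepsilon^2)$. Because $q_{1-\alpha,m}$ is the empirical $(1-\alpha)$ quantile, $\hat F_m(q_{1-\alpha,m})\ge 1-\alpha$ and $\hat F_m(q_{1-\alpha,m}^-)\le 1-\alpha$. Combining these with the uniform bound yields $|\hat F(q_{1-\alpha,m})-(1-\alpha)|\le\varepsilon$. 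A union bound over the two events then gives the stated inequality with probability at least $1-2\exp(-2m\varepsilon^2)-\beta(\tau,n_s)$.

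\textbf{Main obstacle.} The step I expect to be delicate is the quantile-inversion step. If $\hat F$ has atoms, or the score is discrete (e.g.\ ROUGE-based), then $\hat F(q_{1-\alpha,m})$ can exceed $1-\alpha$ by the size of a jump, and only the one-sided bound survives. I would handle this by assuming $\hat P_{s|\x_{\text{new}}}$ is continuous, which is natural for diffusion outputs. Then $\hat F_m$ has jumps of size $1/m$, and the DKW bound gives the two-sided control up to a $1/m$ term, which can be absorbed by slightly enlarging $\varepsilon$ or handled by the usual randomized-quantile convention.

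\textbf{Limit statement.} Fix $\varepsilon,\tau>0$ and let $n,m\to\infty$. Then $2\exp(-2m\varepsilon^2)\to0$ and $\beta(\tau,n_s)\to0$, so with probability tending to one the coverage error is at most $\varepsilon+\tau$. Letting $\varepsilon,\tau\to0$ afterwards gives the iterated limit $1-\alpha$.
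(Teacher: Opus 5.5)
Your proposal is correct and follows the paper's proof. The paper splits the coverage gap into a generation term, bounded uniformly in the threshold by $\mathrm{TV}(P_{s|\x_{\text{new}}},\hat P_{s|\x_{\text{new}}})\le\tau$, and a Monte Carlo term, controlled by the DKW inequality; the two are then combined by a union bound, exactly as you do.

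Your handling of the quantile-inversion step is more careful than the paper's. The paper tacitly takes $\hat P^{(m)}_{s|\x_{\text{new}}}(s\le q_{1-\alpha,m})=1-\alpha$, which fails when $\hat P_{s|\x_{\text{new}}}$ has atoms, so your continuity assumption is genuinely needed. Under that assumption, your left-limit inequality $\hat F_m(q_{1-\alpha,m}^-)\le 1-\alpha$ already gives the exact two-sided $\varepsilon$ bound, so the extra $1/m$ slack you mention is unnecessary.
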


Theorem \ref{thm1} provides a statistical guarantee of the validity of GSI's prediction set $\mathcal{C}_{\alpha}(\x_{\text{new}})$, ensuring that it contains the future outcome $\Y_{\text{new}}$ with the desired level of confidence. It also suggests that the GSI's confidence level is primarily determined by the generation error $\tau$, as the Monte Carlo error $\varepsilon$ is user-controlled and can be made arbitrarily small.
It is worth emphasizing that our results establish a conditional coverage guarantee, which is stronger than the marginal coverage guarantees provided by classical conformal inference \citep{angelopoulos2021gentle}. Similar conditional guarantees in a probably-approximately-correct (PAC) style appear in prior work \citep{jung2023batch, alaa2023conformalized}, but those results are confined to the quantile-regression setting.

This theoretical bound also indicates the necessity of a bias-correction step when the generation error is large. To address this, we introduce an additional calibration set (referred to as the validation set) to realign the empirical coverage or significance level with the nominal target. Practically, this is achieved by selecting the largest $\alpha$ that attains the desired nominal level on the validation set. We will describe this procedure in detail within the specific experimental examples.

\section{Experiments}
\label{numerical} 
To demonstrate the application of the GSI method to uncertainty quantification in multimodal tasks, we begin by evaluating it on a classical tabular prediction task to demonstrate its superior conditional coverage performance relative to conventional approaches (Section~\ref{tabular}). We then extend our analysis to hallucination detection in a QA setting using the LLaMA-3.1-8B-Instruct model \citep{llama3modelcard} (Section~\ref{hallucinations}), followed by an application to image captioning (Section~\ref{captioning}). Additional experimental details are provided in Appendix~C.

\subsection{Tabular prediction}
\label{tabular}

This subsection assesses GSI on tabular regression and compares it with several conformal-prediction methods
across five public benchmarks.

\noindent\textbf{Baselines.} 
We benchmark against the split Conformal Prediction (CP) \citep{shafer2008tutorial} and Conformal Unconditional Quantile Regression (CUQR) \citep{alaa2023conformalized}. The CUQR method is noted for strong conditional-coverage guarantees \citep{alaa2023conformalized}, which have the same focus with our method.

\noindent\textbf{Datasets.} 
We evaluate GSI on five standard tabular benchmarks from the UCI repository and prior conformal-prediction work: MEPS-20, Bio, Kin8nm, Naval, and Blog \citep{feldman2021improving,romano2019conformalized,chung2021beyond,alaa2023conformalized}. 
MEPS-20 comprises 17,541 samples with 139 features describing U.S.\ medical expenditures. Bio contains 45,730 samples of physicochemical protein descriptors with 9 features. Kin8nm includes 8,192 observations and 8 kinematic features of a robotic arm. Naval consists of 11,934 samples and 17 features characterizing yacht propulsion. Finally, Blog comprises 52,937 posts with 280 features capturing weblog popularity.

\begin{table*}[htbp]
\caption{Performance on five benchmarks at the nominal $90\%$ level, summarized by marginal coverage ($C_{\text{marg}}$), average interval length ($L_{\text{avg}}$), and worst-case subgroup coverage ($C_{G}$). Higher coverage and shorter intervals indicate better performance. Baseline results for CP and CUQR are reproduced from \citep{alaa2023conformalized}. The boldface marks the best result in each metric.}
\label{tab:sup_table}
\centering
\setlength{\tabcolsep}{3pt} 
\begin{tabular}{l|ccc|ccc|ccc|ccc|ccc}
\toprule
& \multicolumn{3}{c|}{\textbf{MEPS--20}}
& \multicolumn{3}{c|}{\textbf{Bio}}
& \multicolumn{3}{c|}{\textbf{Kin8nm}}
& \multicolumn{3}{c|}{\textbf{Naval}}
& \multicolumn{3}{c}{\textbf{Blog}} \\
\midrule
& $C_{\text{marg}}$ & $L_{\text{avg}}$ & $C_{G}$
& $C_{\text{marg}}$ & $L_{\text{avg}}$ & $C_{G}$
& $C_{\text{marg}}$ & $L_{\text{avg}}$ & $C_{G}$
& $C_{\text{marg}}$ & $L_{\text{avg}}$ & $C_{G}$
& $C_{\text{marg}}$ & $L_{\text{avg}}$ & $C_{G}$ \\
\midrule
CP & 0.90 & 1.24 & 0.15 & \change{\textbf{0.90}} & 2.42 & 0.85 & 0.90 & 2.17 & 0.83 & \change{\textbf{0.89}} & 1.31 & 0.78 & 0.89 & 1.89 & 0.57 \\
CUQR & 0.89 & 1.21 & 0.76 & \change{\textbf{0.90}} & 2.40 & \textbf{0.88} & 0.89 & 2.19 & \textbf{0.85} & 0.86 & 1.26 & \textbf{0.85} & 0.87 & 1.82 & 0.67 \\
GSI & \change{\textbf{0.91}} & \textbf{0.98} & \textbf{0.80} & 0.89 & \textbf{2.19} & \textbf{0.88} & \change{\textbf{0.90}} & \textbf{1.82} & \textbf{0.85} & \change{\textbf{0.89}} & \textbf{0.96} & \textbf{0.85} & \change{\textbf{0.90}} & \textbf{1.66} & \textbf{0.80} \\
\bottomrule
\end{tabular}
\end{table*}

\noindent\textbf{Experimental protocol.} 
Each dataset is split $85{:}15$ into training and test portions, after \cite{alaa2023conformalized}. 
The training portion is further divided $50{:}50$ into training and calibration splits. 
A gradient-boosting regressor is fitted on the training data for all methods. \change{In this tabular setting, $\X$ denotes the covariate vector, $\Y$ the scalar response, and the score function is the absolute residual $s(\Y,\hat f(\X)) = |\Y-\hat f(\X)|$. Consequently, the GSI prediction set is an interval centered at the fitted regression prediction. For each test point, GSI uses $m=1000$ Monte Carlo draws from the learned conditional score distribution.}

\noindent\textbf{Evaluation metrics.} All methods are evaluated in terms of
\begin{itemize}[leftmargin=1.2em]
\item \emph{Marginal coverage} \(C_{\text{marg}}\): the overall coverage rate, which should be at least the nominal level \(1 - \alpha\).
\item \emph{Average interval length} \(L_{\text{avg}}\): shorter intervals indicate greater efficiency of the prediction sets.
\item \emph{Worst-case subgroup coverage} \citep{alaa2023conformalized}: 
$
C_{G} \;=\; \min_{g} C_{\text{marg}}(\widehat{\mathcal{S}}_{g})$, 
where \(C_{\text{marg}}(\widehat{\mathcal{S}}_{g})\) is the coverage within subgroup \(g\). The \(G=10\) subgroups are defined by \(k\)-means clustering on the training data. A higher \(C_{G}\) indicates more reliable conditional coverage across all subgroups.
\end{itemize}

\noindent\textbf{CUQR calibration.} 
CUQR introduces an additional validation step that searches for the largest nominal level~\(\alpha\) whose interval attains the desired coverage, thereby compensating for score-distribution bias \citep{alaa2023conformalized}. 
To ensure a fair comparison, we apply the same bias-correction rule when calibrating GSI.

\noindent\textbf{Empirical comparison.} The target coverage level is set as \(1-\alpha = 0.90\).
Table~\ref{tab:sup_table} reveals that only GSI and CUQR show better subgroup coverage than the original conformal prediction method. Crucially, GSI delivers shorter prediction intervals than CUQR while matching---or surpassing---its conditional coverage, showcasing the benefits of generative modelling. GSI shrinks intervals by $10$--$20\%$ on average.
GSI also displays markedly lower variance in both interval length and coverage across data domains, highlighting its robustness and strong out-of-sample generalization.

Compared with CP and CUQR, GSI leverages a generative model that learns the entire conditional distribution of $s$ given $\X$, thereby capturing heteroscedastic and highly skewed residuals. By sampling from this conditional distribution, GSI constructs highest-density regions whose width adapts to local predictive uncertainty. This yields shorter intervals in low-noise regions without sacrificing coverage in high-noise subgroups, thereby increasing $C_{G}$.

\change{Appendix~C further reports an additional comparison of four score generators on the same five tabular datasets: conditional diffusion, conditional VAE \citep{sohn2015learning}, conditional GAN \citep{mirza2014conditional}, and conditional flow \citep{winkler2019learning}. Among these methods, diffusion and conditional flow exhibit similar marginal coverage overall, but diffusion shows more stable subgroup coverage while maintaining competitive interval lengths. The conditional VAE performs competitively on some datasets, although it generally produces longer intervals, whereas the conditional GAN tends to undercover despite its shorter intervals. These findings support the use of conditional diffusion as a robust default generator in the current implementation of GSI.}

In summary, GSI's generative mechanism produces data-adaptive prediction sets whose behavior adjusts naturally to the underlying uncertainty structure. This robustness across heterogeneous tabular domains underscores the benefits of combining confidence calibration with modern generative modeling.

\subsection{Detection of hallucinations in LLM outputs}
\label{hallucinations}

This subsection tackles the pressing challenge of hallucination detection---scenarios in which large language models (LLMs) generate fluent yet factually incorrect or nonsensical answers. 
Ensuring the trustworthiness of text generated by LLMs remains a critical challenge.

In this experiment, we evaluate three competitive detectors on responses produced by the \texttt{LLaMA-3.1-8B-Instruct} model for the WikiQA corpus \citep{yang-etal-2015-wikiqa}, which provides 1,473 question-answer pairs with human-verified ground-truth labels. Following standard practice, we partition the corpus into 1,040 calibration instances, 140 validation instances, and 293 held-out test instances for evaluation. Since this experiment utilizes a pre-trained LLM, there is no need for an additional training set to fit the prediction model $f$. Instead, the calibration instances are directly employed to construct the score set $\mathcal{D}_s$, i.e., $n_s = 1040$.

\change{This setup clarifies the role of reference answers in practice. Although a future test question does not come with a verified answer at decision time, the method requires reference answers offline for calibration, validation, and evaluation. When such references are scarce, GSI can still be deployed using curated benchmark sets, periodic human annotation, or other proxy-supervision pipelines, but its reliability then depends directly on the quality and representativeness of that auxiliary supervision. We therefore view the present experiment as most relevant to domains with audited historical responses rather than to fully reference-free online deployment.}

\noindent\textbf{Hypothesis-testing formulation.} For a given question $\mathbf X$ with reference answer $\Y$, 
let $\hat f(\X)$ be the model output. Define a semantic fidelity score $s(\Y,\hat f(\X))$ and fix an acceptability threshold $c$; hallucination detection is then cast as
\[
H_0: s(\Y,\hat f(\X)) \le c \quad\text{versus}\quad H_a: s(\Y,\hat f(\X)) > c,
\]
with $c$ representing a predefined threshold of acceptable semantic dissimilarity.
To test, we compute semantic dissimilarity over the calibration and validation samples,
\begin{equation*}
s(\Y,\hat f(\X)) = 1 - \mathrm{cosine\_sim}(\phi(\Y), \phi(\hat f(\X))).
\end{equation*}
\change{Here, $\phi(\cdot)$ denotes the text embedding and $\mathrm{cosine\_sim}$ the cosine similarity. The embedding is obtained from OpenAI's \texttt{text-embedding-3-small} model. Under this definition, $s(\Y,\hat f(\X)) \in [0,1]$, and large dissimilarity indicates poor alignment with the reference and thus strong evidence against $H_0$.}

\noindent\textbf{Labeling policy.} 
Consistent with prior work \citep{gui2024conformal}, we assign a label to the data for evaluation by setting $c=0.7$, which means any response whose cosine similarity with the reference falls below 0.3 is flagged as a hallucination, denoted by $z=1$. 

\noindent\textbf{Calibration strategy.} Since repeated outputs per test question are unavailable, we calibrate on the validation split to control the Type I error rate---the proportion of non-hallucinatory cases (\(z=0\)) incorrectly flagged as hallucinations. We tune the decision threshold to ensure the empirical Type I error does not exceed the nominal level.

The test is conducted in the following two steps:

\begin{enumerate}[leftmargin=1.2em]

\item \emph{Estimation of Type I error on the validation set:} 
For a grid of candidate significance levels \(\{\alpha_j\}\), construct the corresponding prediction sets, reject \(H_{0}\) whenever the threshold \(c\) falls outside the set (i.e., when \(P_{s\mid x}(s>c)>\alpha_j\)), and record the empirical Type I error. Then select the largest level \(\alpha_{*}\) whose observed error does not exceed the nominal level \(\alpha\).

\item \emph{Output results on the test set:} 
For each test instance, compute the prediction set at level \(\alpha_{*}\), and reject \(H_{0}\)---thus flagging the instance as a hallucination---whenever \(c\) lies outside this set.

\end{enumerate}

\noindent\textbf{Methods under comparison.} We benchmark our Generative Score Inference (GSI) against two top performers, Semantic Entropy (SE) \citep{farquhar2024detecting} and Conformal Alignment (CA) \citep{gui2024conformal} in the same setting. 
\begin{itemize}[leftmargin=1.2em]
\item \change{\textbf{GSI} learns the conditional distribution of semantic dissimilarities using a diffusion model and constructs answer-specific prediction sets with the synthetic sample size fixed at $m=1000$.}
\item \textbf{SE} quantifies uncertainty by measuring the dispersion of multiple generated candidates. For each question, we generate 10 answers, compute the SE, and then convert these raw entropy scores into probability $P(z=1)$ using a logistic regression model trained on the calibration data.

\item \textbf{CA} fits an XGBoost classifier \citep{chen2016xgboost} to learn the alignment score, which is the probability that the testing object belongs to the $H_0$ set.
\end{itemize}

All three methods share the same null and alternative hypotheses (\(H_0\) and \(H_a\)), but differ in how they estimate the decision probability---either \(P(z=1)\) or \(P(s>c)\)---and in the rejection criterion (i.e., whether this probability exceeds the significance level \(\alpha\)). To ensure a fair comparison, we apply the identical calibration procedure described above when generating test results for each method.

\noindent\textbf{Empirical findings.} 
We evaluate statistical power while maintaining exact Type I error control at each nominal $\alpha$. 
As depicted in Figure \ref{fig_qa}, both GSI and CA approach a power of $1$ as $\alpha$ increases, whereas SE saturates at a markedly 
lower level. Notably, GSI consistently outperforms CA and SE for practically relevant $\alpha$-values, demonstrating 
superior power without compromising the validity of Type I control.

\begin{figure}[h]
\centering
\includegraphics[width=.95\linewidth]{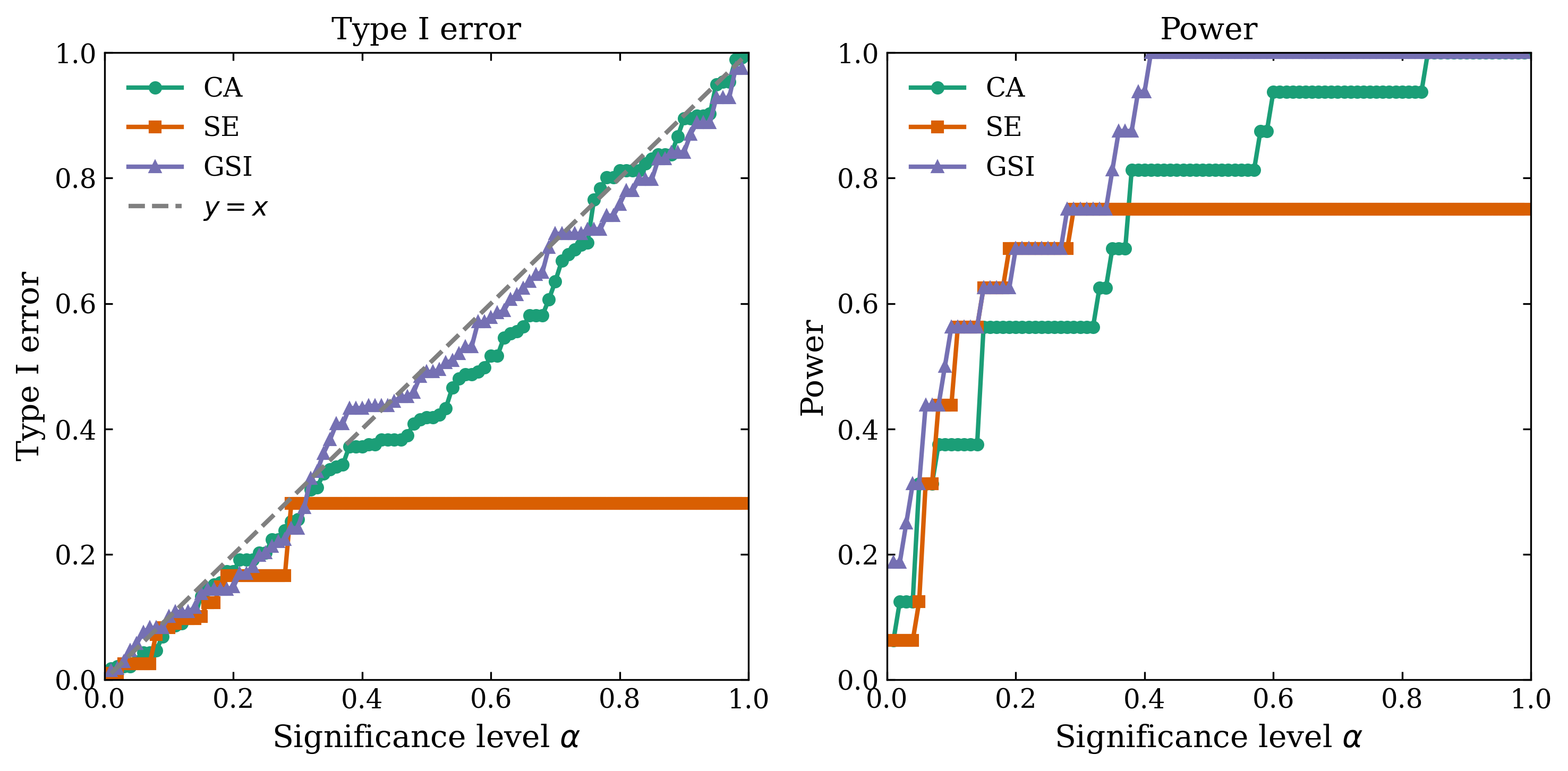}
\caption{Type I error and power comparison for SE \citep{farquhar2024detecting}, CA \citep{gui2024conformal}, and GSI across varying $\alpha \in (0,1)$. Smaller $\alpha$ indicates higher confidence $(1 - \alpha)$. The diagonal line $y = x$ represents ideal Type~I error control.}
\label{fig_qa}
\end{figure}

Table \ref{tab:qa_scores} provides qualitative insight into the comparative behavior of the three detectors. In Example 1, these detectors generate several mutually inconsistent results. SE assigns a low-entropy (overconfident) score that is indistinguishable from that of the true-answer case, while CA yields an equivocal probability that falls below its rejection threshold. Both miss the hallucination. By contrast, GSI explicitly measures divergence from the gold answer and correctly flags the output as hallucinatory. In Examples 2 and 3, all three methods agree, declining to flag a hallucination in the former and detecting one in the latter, mirroring human judgments and underscoring that GSI's superior sensitivity does not come at the expense of specificity.

\noindent\textbf{Why does GSI excel?} 
The superiority of GSI over SE and CA can be attributable to two principled advantages:
\begin{enumerate}[leftmargin=1.2em]
\item \emph{Reference-aware scoring.} 
GSI and CA compute their statistics with explicit respect to the human-verified reference answer, whereas SE relies solely on internal diversity across generated candidates. 
By anchoring every prediction to ground truth, GSI (and, to a lesser degree, CA) can expose hallucinations that appear coherent in a self-referential sense but diverge from factual content.
\item \emph{Answer-level uncertainty quantification.} 
GSI models the full conditional distribution of semantic dissimilarities with a diffusion process, yielding answer-specific credibility sets that sensitively capture nuanced semantic mismatches. 
In contrast, SE compresses variability into a single entropy value while CA employs an XGBoost point estimator, which may overlook subtle but critical deviations.
\end{enumerate}
Together, these properties position GSI as a robust safeguard against hallucinations, offering high sensitivity and precise control of false alarms in next-generation LLM deployments.

\begin{table*}[ht]
\renewcommand{\arraystretch}{1.15}
\caption{Illustrative examples of generated answers and their testing results for given questions in the Q\&A task. The '*' indicates a rejection, i.e. the probability $P(z=1)$ given in the method larger than the threshold $\alpha_*$, which is derived from the validation set under the significance level $\alpha=0.05$. \label{tab-qa}}
\centering
\scriptsize
\begin{tabular}{%
>{\raggedright\arraybackslash}p{1.5 cm} 
>{\raggedright\arraybackslash}p{3 cm} 
>{\raggedright\arraybackslash}p{3 cm} 
l l l l 
}
\toprule
\bfseries Question & \bfseries Reference Answer & \bfseries Generated answer 
& \bfseries Score & \bfseries SE & \bfseries CA & \bfseries GSI \\
\midrule
\multirow{3}{=}{Who is on blink\,182 album cover?}
& \multirow{3}{=}{The cover artwork for Enema of the State features porn star Janine Lindemulder famously clad in a nurse uniform.}
& The band members are on the Blink-182 album cover.
& 0.883 & 0.028 & 0.007 & 0.117* \\
& & The Blink-182 album cover features the band members as children. 
& & ($\leq$0.143) & ($\leq$0.102) & (>0.078) \\
& & The album cover features the faces of the band members.
& & & & \\
\addlinespace[5pt]

\multirow{3}{=}{What is a popular people meter?}
& \multirow{3}{=}{A people meter is an audience measurement tool used to measure the viewing habits of TV and cable audiences.}
& A popular people meter is a device used to track television viewership and audience preferences.
& 0.418 & 0.028 & 0.0291 & 0.045 \\
& & A popular people meter is an electronic device used to measure TV ratings and viewer preferences.
& & ($\leq$0.143) & ($\leq$0.102) & ($\leq$ 0.078) \\
& & A popular people meter is a device used in television ratings to measure the viewing habits.
& & & & \\
\addlinespace[5pt]

\multirow{3}{=}{When did Proof die?}
& \multirow{3}{=}{DeShaun Dupree Holton (October 2, 1973 -- April 11, 2006), better known by his stage name Proof, was an American rapper and actor from Detroit, Michigan.}
& Proof, the rapper from Flint, Michigan, died in a car crash on April 11, 2006.
& 0.820 & 0.159 * & 0.104 * & 0.132 * \\
& & Proof died in 2020 after a battle with cancer.
& & ($>$0.143) & ($>$0.102) & ($>$0.078) \\
& & Proof died in 1953 after suffering a heart attack.
& & & & \\
\midrule
$\alpha_*$
& & 
& & 0.143 & 0.102 & 0.078 \\
\bottomrule
\end{tabular}
\label{tab:qa_scores}
\end{table*}

\subsection{Selection in image captioning}
\label{captioning}

This subsection targets the companion problem of image--caption selection: identifying images that a captioning system can describe accurately while rigorously controlling false discoveries. 
We work with the first 10,000 images of the COCO 2014 validation dataset (\url{https://cocodataset.org}) and adopt the same three-way partition as in Section \ref{hallucinations}: 8,000 images for calibration, 1,000 for validation, and 1,000 for testing. Captions are generated with the pretrained \texttt{BLIP} model \citep{li2022blip}, eliminating the need to train a captioning model from scratch (i.e., $n_s=8000$).

\noindent\textbf{Multiple-hypothesis testing: Image selection.}
For each image $\X$, let $\hat g(\X)$ denote the caption produced by \texttt{BLIP} and $\Y_{j}$ the human reference. 
We quantify mismatch via
\[
s(\Y, \hat g(\X)) \;=\; 1 \;-\; \operatorname{ROUGE\!-\!L}\bigl(\Y,\hat g(\X)\bigr)\in[0,1],
\]
where ROUGE-L captures the longest common subsequence overlap between the two captions. Small $s(\Y, \hat g(\X))$ values (large ROUGE-L) indicate tight semantic alignment. 
Following \cite{gui2024conformal}, we frame image selection as a simultaneous testing problem: for $j= 1,\cdots,n_{\text{test}}$,
\[
H_{0,j}\!:\; s(\Y_j, \hat g(\X_j)) \,\ge\, c
\quad\text{versus }\quad
H_{a,j}\!:\; s(\Y_j, \hat g(\X_j))\,<\, c,
\]
with decision threshold \(c = 0.7\). 
Rejecting \(H_{0,j}\) labels image $\X_j$ as well-captioned and eligible for downstream use while controlling the overall false-discovery rate (FDR) across all 1,000 images in the inference sample.

\change{The goal is to identify the set of images that \texttt{BLIP} captions well while controlling the false discovery rate of erroneous selections. During generative-model training, \texttt{BLIP} image embeddings serve as covariates for the conditional diffusion model. A synthetic sample of size $m=1000$ is drawn from this model to approximate $P_{s|\x_j}\bigl(s(\Y_j,\hat g(\X_j)) \ge c\bigr)$ for each image, yielding the $p$-values used in the Benjamini--Hochberg step \citep{benjamini1995controlling}.} 

The testing pipeline is designed as follows:


\begin{enumerate}[leftmargin=1.2em]
\item \emph{Conformal $p$-value construction.} 
Using the calibration set, we train a conditional diffusion model that maps \texttt{BLIP} image features to the distribution of BLIP-to-reference dissimilarities. For each test image $j$, the model estimates the probability that $H_{0,j}$ holds, $P_{s|\x_j}\bigl(s(\Y_j,\hat g(\X_j)) \ge c\bigr)$, from which we compute the conformal $p$-value \citep{jin2023selection}, 
\begin{equation*}
p_{j}\;=\;\frac{1+ \sum_{i=1}^{n_{\text{val}}} q_i}{n_{\text{val}}+1},
\end{equation*}
where $n_{\text{val}}=1,000$ and $q_i$ is given by the indicator
{\small\begin{equation*}
I\bigl(y_{i}=0,P_{s|\x_i}(s(Y_i, \hat g(\X_i))\ge c)\ge P_{s|\x_j}(s(Y_j, \hat g(\X_j))\ge c)\bigr),
\end{equation*}}

\item \emph{FDR control.} 
Apply the Benjamini--Hochberg procedure to the vector of $p$-values: reject $H_{0,j}$ if 
\(
p_{j}\le \alpha\,\hat k/n_{\text{test}},
\)
where 
\(
\hat k=\max\{k:p_{(k)}\le\alpha k/n_{\text{test}}\},
\)
and $n_{\text{test}}=1,000$. 
\end{enumerate}

\noindent\textbf{Evaluation.} 
Figure~\ref{fig_cap1} compares realized FDR (left) and power (right) for GSI and the Conformal Alignment baseline across target FDR levels from 0 to 0.5. 
Ideal behavior lies on the diagonal $y=x$. 
GSI adheres closely to the nominal FDR while achieving uniformly higher power than CA, demonstrating more reliable image selection.

As in the hallucination-detection example, GSI models the entire conditional distribution of dissimilarities via diffusion, yielding caption-specific credibility sets that detect subtle semantic errors. CA, by contrast, relies on a single XGBoost point estimate and can miss fine-grained mismatches, leading to lower power at comparable FDR levels.

\begin{figure}[h]
\centering
\includegraphics[width=.95\linewidth]{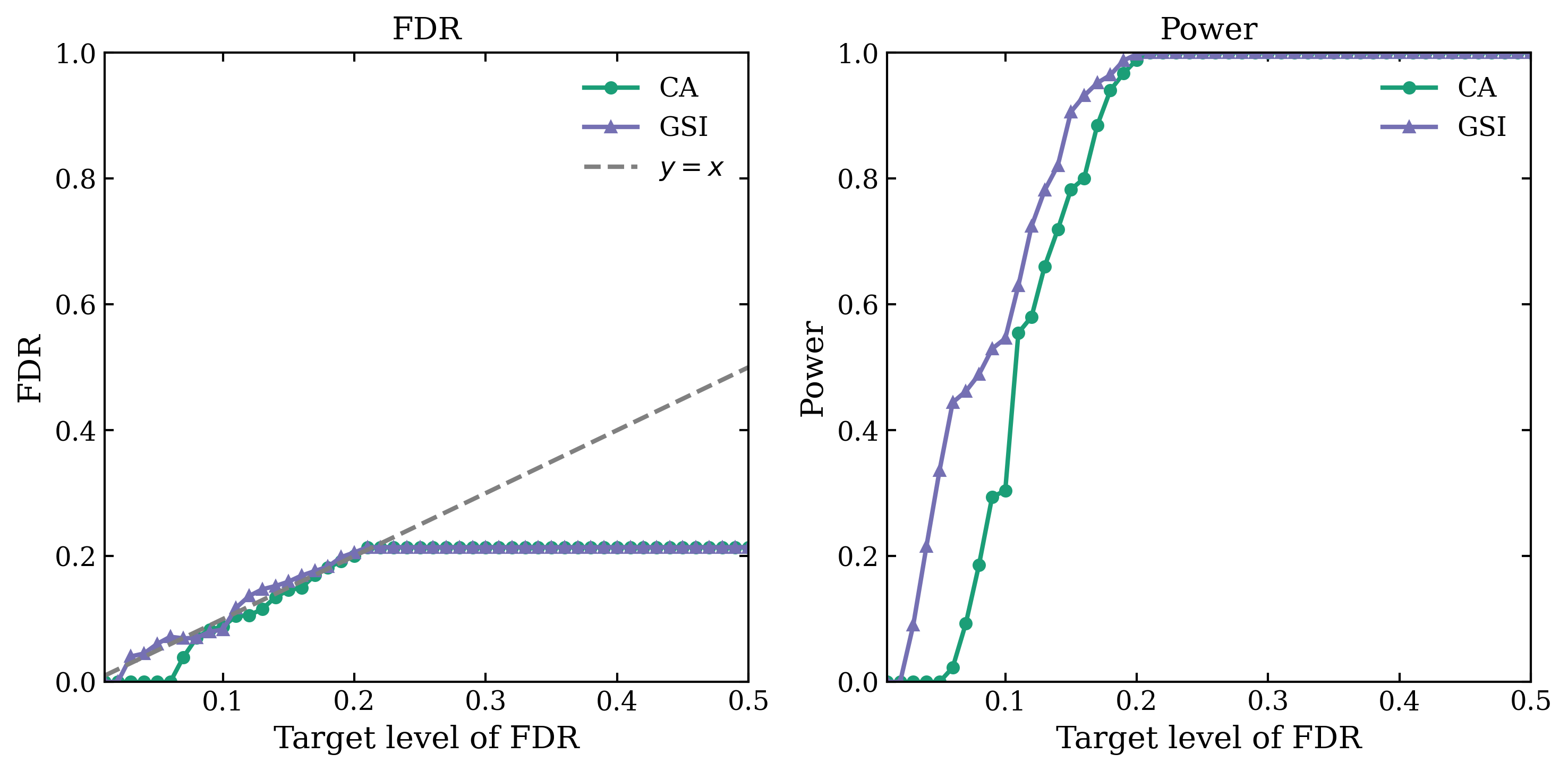}
\caption{FDR and power for CA \citep{gui2024conformal} and GSI across target FDR levels; the line $y=x$ represents perfect FDR control.}
\label{fig_cap1}
\end{figure}

\change{\noindent\textbf{Computational cost.}
GSI with diffusion-based generators typically incurs higher computational cost. Relative to CP and CUQR, the additional cost arises from fitting and sampling the conditional score generator. Appendix~C quantifies this trade-off. In the tabular generator comparison, diffusion training times range from 65 to 214 seconds, and inference time from 69 to 361 seconds, across the five datasets; by contrast, the alternative generators are substantially faster at inference time. In the multimodal tasks, the forward pass of the pretrained LLM or BLIP model is shared across methods, so the incremental cost of GSI again comes from the score-generation stage. For question answering, Appendix~C reports training/inference times of 51/67 seconds for GSI, compared with 44/negligible for CA and 843/14{,}400 for SE; for image captioning, the corresponding training/inference times are 128/333 seconds for GSI and 69/negligible for CA. Thus, diffusion-based GSI is more expensive than CA and lighter conditional generators because it requires both generator training and reverse-process sampling, but it remains substantially cheaper than SE in the Q\&A task, where inference requires collecting multiple candidate answers from the LLM and is therefore computationally intensive.}

\section{Conclusion}
\label{discussion}

This paper introduces Generative Score Inference, a unified uncertainty quantification framework for constructing prediction sets for multimodal tasks using high-fidelity generative models such as diffusion models. This approach offers a statistically guaranteed inference method and has broad applicability across various domains. Compared to a strong competitor in the literature, conformal inference, GSI leverages generative models to better estimate the conditional score distribution rather than relying on empirical distributions. GSI is particularly suitable for complex problems and multimodal distributions. In our experiments, GSI yields prediction intervals that are shorter than those produced by conformal inference while maintaining comparable marginal coverage and achieving better conditional coverage, thereby providing tighter uncertainty quantification.

Our primary goal is to provide researchers with tools that foster reliable conclusions from data. These tools have the potential to enhance the credibility and reliability of data-driven discoveries and strengthen statistical inferences across a broad range of applications. In this spirit, applications to hallucination detection and image-caption selection further demonstrate GSI's practical value. \change{At the same time, the effectiveness of GSI depends on the quality of the learned conditional score generator and on the availability of suitable supervision for defining the score in a given application. These dependencies are especially visible in reference-based tasks such as hallucination detection, where offline calibration data with verified answers remain essential.}

While our analysis has focused on supervised settings with available reference outputs, the underlying principles of GSI extend naturally to unsupervised tasks. In domains such as anomaly detection, the score function can be defined via unsupervised or self-supervised mechanisms, enabling GSI to quantify uncertainty even without labeled data. This highlights a broader opportunity for future work: leveraging generative models not just for prediction, but as a foundation for principled statistical inference in complex, label-sparse environments.

\appendix
\begin{center}
\Large\textbf{Appendix}
\end{center}

\section{Proofs}
\label{sec: proofs}

\begin{proof}[Proof of Theorem 3.2:] Let $\hat P^{(m)}_{s|\x_{\text{new}}}$ denote the empirical distribution based on a synthetic sample of size $m$ given $\x_{\text{new}}$. Note that
\begin{align*}
P_{\y|\x_{\text{new}}}&(\Y_{\text{new}}\in \mathcal{C}_{\alpha}(\x_{\text{new}}))=
P_{s|\x_{\text{new}}}(
s(\Y_{\text{new}}, \hat f(\x_{\text{new}})) \leq q_{1-\alpha,m}
)\\
= & \underbrace{P_{s|\x_{\text{new}}}(
s(\Y_{\text{new}}, \hat f(\x_{\text{new}})) \leq q_{1-\alpha,m}
)-\hat{P}_{s|\x_{\text{new}}}(
s(\Y_{\text{new}}, \hat f(\x_{\text{new}})) \leq q_{1-\alpha,m}
)}_{I_1}\\
&+\underbrace{\hat{P}_{s|\x_{\text{new}}}(
s(\Y_{\text{new}}, \hat f(\x_{\text{new}})) \leq q_{1-\alpha,m})-\hat{P}^{(m)}_{s|\x_{\text{new}}}(s(\Y_{\text{new}}, \hat f(\x_{\text{new}})) \leq q_{1-\alpha,m})}_{I_2}
\\
&+ \hat{P}^{(m)}_{s|\x_{\text{new}}}(
s(\Y_{\text{new}}, \hat f(\x_{\text{new}})) \leq q_{1-\alpha,m}).
\end{align*}

The first term $I_1$ can be bounded by the TV-norm generation error, while the second term $I_2$ is the estimation error based on a sample of size $m$.
\begin{align*}
|I_1|&=|P_{s|\x_{\text{new}}}(
s(\Y_{\text{new}}, \hat f(\x_{\text{new}})) \leq q_{1-\alpha,m}
) -\hat{P}_{s|\x_{\text{new}}}(
s(\Y_{\text{new}}, \hat f(\x_{\text{new}})) \leq q_{1-\alpha,m}
)|\\
&\leq \sup_{z} |P_{s|\x_{\text{new}}}(z)-\hat P_{s|\x_{\text{new}}}(z)|
\leq {\mathrm{TV}(P_{s|\x_{\text{new}}},\hat P_{s|\x_{\text{new}}})}.
\end{align*}
By the Dvoretzky-Kiefer-Wolfowitz (DKW) inequality \citep{dvoretzky1956asymptotic}, with probability at least $1-2\exp(-2m\varepsilon^2)$, we have
$$|I_2|\leq \sup_{z} |\hat{P}_{s|\x_{\text{new}}}(z)-\hat{P}^{(m)}_{s|\x_{\text{new}}}(z)|
\leq \varepsilon.$$
Then, given the event $\{\mathrm{TV}(P_{s|\x_{\text{new}}},\hat{P}_{s|\x_{\text{new}}})\leq \tau\}$, with probability more than $1-2\exp(-2m\varepsilon^2)$,
we have
$$|P_{\y|\x}(\Y_{\text{new}} \in \mathcal{C}_{\alpha}(\x_{\text{new}}))-(1-\alpha)|\leq \varepsilon+{\tau}.
$$
The desired result follows from Assumption 3.1.
\end{proof}

\section{Diffusion models for score generation}
\label{sec: diff}
Recent advancements in generative modeling provide promising avenues to address challenges in prediction through data generation. Generative models like diffusion models \citep{sohl2015deep, ho2020denoising, zhang2023text, lin2023diffusion, yuan2023spatio, kotelnikov2023tabddpm, zheng2022diffusion, kim2022stasy} tailored for different domains can generate synthetic data closely approximating the data-generating distribution.

In this section, we will describe a conditional diffusion model for generating conditional score values in the GSI method and discuss its generation accuracy theory.
\subsection{Diffusion}

\noindent \textbf{Forward process. } The forward process in the diffusion model systematically transforms a random vector $\Z(0)$ into white noise by progressively
injecting white noise into a differential equation defined with the Ornstein-Uhlenbeck process, leading to diffused distributions from the initial state $\Z(0)$:
\begin{equation}
\label{forward}
\mathrm{d}\Z(\tau)=-{b}_{\tau} \Z(\tau)\mathrm{d}\tau+\sqrt{2{b}_\tau}\mathrm{d}W(\tau),\quad \tau \geq 0,
\end{equation}
where $\Z(\tau)$ has a probability density $p_{\z(\tau)}$, $\{W(\tau)\}_{\tau\geq 0}$ represents a standard Wiener process and ${b}_t$ is a non-decreasing weight function. Under \eqref{forward}, $\Z(\tau)$ given $\Z(0)$ follows $N(\mu_{\tau}\Z(0),\sigma^2_{\tau}\bm I)$, where $\mu_{\tau}=\exp(-\int_0^\tau {b}_s\mathrm{d} s)$ and $\sigma^2_{\tau}=1-\mu_{\tau}^2$. Here, setting
${b}_s=1$ results in $\mu_{\tau}=\exp{(-\tau)}$ and $\sigma^2_{\tau}=1-\exp{(-2\tau)}$. Practically, the process terminates at a sufficiently large $\overline{\tau}$, ensuring the distribution of $\Z(\tau)$,
a mixture of $\Z(0)$ and white noise, resembles the standard Gaussian vector.

\noindent \textbf{Backward process.} Given $\Z(\overline{\tau})$ in \eqref{forward}, a backward process is employed for sample generation for $\Z(0)$. Assuming \eqref{forward} satisfies certain conditions \citep{anderson1982reverse}, the backward process $\bm V(\tau)=\Z(\overline{\tau}-\tau)$, starting with $\Z(\overline{\tau})$, is derived as:
\begin{align}
\label{reverse}
\mathrm{d}\bm V(\tau)={b}_{\overline{\tau}-\tau}(\bm V(\tau)+2\nabla\log p_
{\z(\overline{\tau}-\tau)}(\Z(\overline{\tau}-\tau))\mathrm{d}\tau +\sqrt{2{b}_{\overline{\tau}-\tau}}\mathrm{d}W(\tau); \quad \tau \geq 0,
\end{align}
where $\nabla\log p_{\z}$ is the score function which represents the gradient of $\log p_{\z}$.

\noindent \textbf{Score matching.} To estimate the unknown score function, we minimize a matching loss between the score and its approximator $\theta$:
$\int_{0}^{\overline{\tau}}\mathrm{E}_{\z(\tau)}\|\nabla \log p_{\z(\tau)}(\Z(\tau))-\theta(\Z(\tau),\tau)\|^2\mathrm{d}\tau$,
where $\|\x\|=\sqrt{\sum^{d_x}_{j=1}\x_j^2}$ is the Euclidean norm, which is equivalent to minimizing the following loss \citep{oko2023diffusion},
\begin{equation}
\label{loss_2}
\int_{\underline{\tau}}^{\overline{\tau}}\mathrm{E}_{\z(0)}\mathrm{E}_{\z(\tau)|\z(0)}\|\nabla \log p_{\z(\tau)|\z(0)}(\Z(\tau)|\Z(0))-\theta(\z(\tau),\tau)\|^2\mathrm{d}\tau,
\end{equation}
with $\underline{\tau}=0$. In practice, to avoid score explosion due to $\nabla \log p_{\z(\tau)|\z(0)} \rightarrow \infty$ as $\tau\rightarrow 0$, we restrict the integral interval to $\underline{\tau}>0$ \citep{oko2023diffusion,chen2023improved} in the loss function.
Then, both the integral and $\mathrm{E}_{\z(\tau)|\z(0)}$ can be precisely approximated by sampling $\tau$ from a uniform distribution on $[\underline{\tau},\overline{\tau}]$ and a sample of $\Z(0)$ from the conditional distribution of $\Z(\tau)$ given $\Z(0)$.

\noindent \textbf{Generation.} To generate a random sample of $\bm V(\tau)$, we replace the score $\nabla\log p_{\Z(\overline{\tau}-\tau)}$ by its estimate $\hat \theta$ in \eqref{reverse} to yield $\bm V(\tau)$ in the backward equation. For implementation, we may utilize a discrete-time approximation of the sampling process, facilitated by numerical methods for solving stochastic differential equations, such as Euler-Maruyama and stochastic Runge-Kutta methods \citep{song2020denoising}.

\subsection{Conditional score generation \label{sec_3-2}}

To generate the score $s$ given $\X$, we use a conditional diffusion model to learn the conditional probability density $p_{s|\x}$, as described in \eqref{forward}-\eqref{reverse}.

First, we assign $\Z(0)=s$ to our score generation task in \eqref{forward}.
Given a target training sample $(\x_i,s_i)_{i=1}^{n_s}$, we follow \eqref{forward}-\eqref{reverse} to construct an empirical score matching loss $L(\theta_s)=\sum_{i=1}^{n_s} l(\x_i,s_i;\theta_s)$
in \eqref{loss_2} with
\begin{align*}
\label{loss-diffusion}
l_s(\x_i,s_i;\theta)=
\int_{\underline{\tau}}^{\overline{\tau}} \mathrm{E}_{s(\tau)|s(0)}\|\nabla \log p_{s(\tau)|s(0)}(s(\tau)|s_i)-\theta_s(s(\tau),\x_i,\tau)\|^2 \mathrm{d}\tau,
\end{align*}
where $(\underline{\tau}, \overline{\tau})$ denotes early stopping
for $(0,+\infty)$. The estimated score $\hat \theta_s(s(\tau),\x,\tau)=\arg\min_{\theta_s\in\Theta_s}L_s(\theta_s)$.
We will use the neural network for $\Theta_s$.

\noindent \textbf{Neural network. } An $\mathbb{L}$-layer network $\Phi$ is defined by a composite function
$
\Phi(\x)=(\bm{\mathrm{A}}_\mathbb{L}\sigma(\cdot)+\bm{b}_\mathbb{L})\circ\cdots(\bm{\mathrm{A}}_2\sigma(\cdot)+\bm{b}_2)\circ (\bm{\mathrm{A}}_1\x+\bm{b}_1),
$
where $\bm{\mathrm{A}}_i\in \mathbb{R}^{d_{i+1}\times d_i}$ is a weight matrix and $\bm b_i \in \mathbb{R}^{d_{i+1}}$ is the bias of a linear transformation of the $i$-th layer, and $\sigma$ is the ReLU activation function, defined as $\sigma(\x)=\max(\x,0)$.
Then, the parameter space $\Theta$ is set as $\mathrm{NN}(\mathbb{L},\mathbb{W},\mathbb{S},\mathbb{B},\mathbb{E})$
with $\mathbb{L}$ layers, a maximum width of $\mathbb{W}$, effective parameter number $\mathbb{S}$, the sup-norm $\mathbb{B}$, and parameter bound $\mathbb{E}$:
\begin{equation}
\label{p-space}
\begin{aligned}
\mathrm{NN}(&d_{in}, d_{out}, \mathbb{L}, \mathbb{W}, \mathbb{S}, \mathbb{B}, \mathbb{E})
= \\
\Bigl\{\,& \Phi:\;
d_1 = d_{in},\; d_{\mathbb{L}+1} = d_{out},\;
\max_{1\le i\le \mathbb{L}} d_i \le \mathbb{W},\\
&
\sum_{i=1}^{\mathbb{L}}\!\bigl(\|\bm{\mathrm{A}}_i\|_{0}+\|\bm{b}_i\|_{0}\bigr)\le \mathbb{S},\;
\|\Phi\|_{\infty}\le \mathbb{B},\\
&
\max_{1\le i\le \mathbb{L}}\!\bigl(\|\bm{\mathrm{A}}_i\|_{\infty},\|\bm{b}_i\|_{\infty}\bigr)\le \mathbb{E}
\Bigr\}.
\end{aligned}
\end{equation}

where $\|\cdot\|_{\infty}$ is the maximal magnitude of entries and $\|\cdot\|_0$ is the number of nonzero entries.

Now, we approximate \eqref{reverse} by substituting $\nabla \log p_{s(\tau)|\x}$ with $\hat{\theta}_t$, resulting in
\begin{align}
\label{s-reverse}
\mathrm{d}\hat{\V}(\tau)={b}_{\overline{\tau}-\tau}(\hat{\V}(\tau)+2 \hat{\theta}_s(\hat{\V}(\tau),\x,\overline{\tau}-\tau) )\mathrm{d}\tau
+\sqrt{2{b}_{\overline{\tau}-\tau}}\mathrm{d}W(\tau), \tau\in[0,\overline{\tau}-\underline{\tau}_t],
\end{align}
where we use $\hat{\V}(\overline{\tau}-\underline{\tau})$ for sample
generation to replicate $s(0)$, from the initial state $\hat{\V}(0)\sim N(0,1)$ in \eqref{s-reverse}. The resulting density $\hat{p}_{s|\x}$ is represented by $p_{\hat{\bm v}(\overline{\tau}-\underline{\tau})|\x}$.

Next, we introduce the smoothness assumption specific to diffusion models.

\noindent \textbf{Smooth class. } Let $\bm \alpha$ be multi-index with $|\bm \alpha| \leq \lfloor
r\rfloor$, where $\lfloor r\rfloor$ is the integer part of $r>0$. A H\"older ball $\mathcal{H}^{r}(\mathcal{D},\mathbb{R}^m,B)$ of radius $B$ with the degree of smoothness $r$ from domain $\mathcal{D}$ to $\mathbb{R}^m$ is defined by:
{\small
\begin{multline*}
\biggl\{\, g=(g_1,\cdots,g_m):\ \max_{1\le l\le m}\Bigl(
\max_{|\bm\alpha|\le \lfloor r\rfloor}\ \sup_{\x}\, \bigl|\partial^{\bm\alpha} g_l(\x)\bigr|
+\ \max_{|\bm\alpha|=\lfloor r\rfloor}\ \sup_{\x\ne\y}\,
\frac{\bigl|\partial^{\bm\alpha} g_l(\x)-\partial^{\bm\alpha} g_l(\y)\bigr|}
     {\|\x-\y\|^{\,r-\lfloor r\rfloor}}
\Bigr)\ <\ B \biggr\}.
\end{multline*}}

\begin{assumption}
\label{A_d_c}
Assume that $p^0_{s|\x}(s|\x) = \exp(-c_fs^2 / 2) \cdot g(s,\x)$, where $g$ belongs to $\mathcal{H}^{r}(\mathbb{R} \times [0,1]^{d_x},\mathbb{R},B)$ for a constant radius $B>0$ and $c_f>0$ is a constant. Assume that $g$ is lower bounded away from zero with
$f \geq \underline{c}$.
\end{assumption}

With the assumed smoothness, we give the results of the generation accuracy for conditional diffusion models.

\begin{theorem}[Generation error of diffusion models]
\label{thm_diff_general}
Under Assumption \ref{A_d_c}, setting the neural network's structural hyperparameters of $\Theta=\mathrm{NN}(d_x+2,1,\mathbb{L},\mathbb{W},\mathbb{S},\mathbb{B},\mathbb{E})$ as follows: $\mathbb{L}=c_L \log^4K$, $\mathbb{W}= c_W K\log^7K$, $\mathbb{S}= c_S K\log^9K$,
$\log\mathbb{B}= c_B\log K$, $\log\mathbb{E}= c_E\log^4K$, with diffusion stopping criteria from \eqref{forward}-\eqref{reverse}
as $\log \underline{\tau}= -c_{\underline{\tau}}\log K$ and $\overline{\tau}=c_{\overline{\tau}}\log K$, where $\{c_L,c_W, c_S, c_B,c_E,c_{\underline{\tau}},c_{\overline{\tau}}\}$ are sufficiently large constants, yields the error in diffusion generation :
For any small real number $\tau>0$,
\begin{eqnarray}
\label{c-rate-general}
& P(\mathrm{TV}(p^0_{s|\x},\hat{p}_{s|\x})\geq \tau)\leq \beta(\tau,n_s)=
\frac{c_0{n_s}^{-\frac{r}{d_x+1+2r}}\log^k {n_s}}{\tau},
\end{eqnarray}
with some constant $c_0>0$ and $k>0$.
\end{theorem}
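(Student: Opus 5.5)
The plan is to follow the standard route for diffusion generation-error bounds, in the style of \citet{oko2023diffusion}, adapted to the conditional setting as in \citet{tian2024enhancing}. The argument bounds the expected total variation $\mathrm{E}\,\mathrm{TV}(p^0_{s|\x},\hat p_{s|\x})$ and then applies Markov's inequality. Markov's inequality explains the $1/\tau$ factor in \eqref{c-rate-general}: it gives $P(\mathrm{TV}\ge\tau)\le \mathrm{E}\,\mathrm{TV}/\tau$. So it suffices to show
\[
\mathrm{E}\,\mathrm{TV}(p^0_{s|\x},\hat p_{s|\x})\lesssim n_s^{-\frac{r}{d_x+1+2r}}\log^k n_s
\]
for a suitable choice of $K$ as a power of $n_s$.

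\textbf{Step 1: decompose the TV error.} By the data-processing inequality and Pinsker's inequality (via Girsanov), the TV between the law of $\hat\V(\overline{\tau}-\underline{\tau})$ and $p^0_{s|\x}$ splits into three terms:
\begin{enumerate}
\item[(a)] the early-stopping error $\mathrm{TV}(p^0_{s|\x},p_{s(\underline{\tau})|\x})$;
\item[(b)] the initialization mismatch $\mathrm{TV}(p_{s(\overline{\tau})|\x},N(0,1))\lesssim e^{-\overline{\tau}}$;
\item[(c)] the square root of the integrated score error $\int_{\underline{\tau}}^{\overline{\tau}}\mathrm{E}\|\nabla\log p_{s(\tau)|\x}-\hat\theta_s\|^2\,\mathrm d\tau$.
\end{enumerate}
With $\overline{\tau}=c_{\overline{\tau}}\log K$, term (b) is polynomially small in $K$. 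Term (a) is $O(\underline{\tau}^{1/2}\,\mathrm{polylog})$ under Assumption~\ref{A_d_c}, because the Gaussian tail of $p^0$ controls the effect of the small time shift. Choosing $\log\underline{\tau}=-c_{\underline{\tau}}\log K$ makes (a) negligible as well. (If one prefers, discretization of \eqref{s-reverse} can be ignored, since the theorem concerns the continuous process.)

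\textbf{Step 2: approximation.} Write the true conditional score as a ratio of Gaussian convolutions of $p^0$:
\[
\nabla\log p_{s(\tau)|\x}=\frac{\nabla (p^0\ast\phi_\tau)}{p^0\ast\phi_\tau}.
\]
Using Assumption~\ref{A_d_c} ($g\in\mathcal H^r$, $g\ge\underline c$, Gaussian envelope), one approximates $g$ by local Taylor polynomials on a grid of $[-C\sqrt{\log K},C\sqrt{\log K}]\times[0,1]^{d_x}$ with $K$ cells. The Gaussian integrals are then approximated by ReLU networks, and the division by clipping with the lower bound on the density. This yields a network in $\mathrm{NN}(d_x+2,1,\mathbb L,\mathbb W,\mathbb S,\mathbb B,\mathbb E)$ with the stated polylog sizes whose integrated squared score error is $\lesssim K^{-2r/(d_x+1)}\mathrm{polylog}(K)/\sigma_\tau^2$-type. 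After integrating in $\tau$ this gives roughly $K^{-2r/(d_x+1)}\log^{c}K$. I expect this step to be the main obstacle: handling small $\tau$, where the score blows up, and the unbounded $s$-domain both require care. I would import the corresponding construction of \citet{oko2023diffusion} essentially verbatim, with $\x$ appended as extra input coordinates, rather than redo it.

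\textbf{Step 3: estimation and balancing.} The estimation error of the empirical minimizer $\hat\theta_s$ of $L_s$ is controlled via a covering-number bound for the network class, $\log N\lesssim \mathbb S\mathbb L\log(\mathbb W\mathbb B\mathbb E)\lesssim K\,\mathrm{polylog}K$. The losses are truncated using $\mathbb B$ and the Gaussian tails. This gives an excess-risk bound of order
\[
K^{-2r/(d_x+1)}\mathrm{polylog}K+\frac{K\,\mathrm{polylog}(K)\log n_s}{n_s}.
\]
Balancing the two terms with $K\asymp n_s^{(d_x+1)/(d_x+1+2r)}$ makes the integrated score error $\lesssim n_s^{-2r/(d_x+1+2r)}\log^{k'}n_s$. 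Plugging this into term (c) of Step 1, after the square root, yields
\[
\mathrm{E}\,\mathrm{TV}\lesssim n_s^{-r/(d_x+1+2r)}\log^{k}n_s,
\]
and the Markov step above completes the proof of \eqref{c-rate-general}.
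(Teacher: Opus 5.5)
Your proposal is correct and has the same structure as the paper's proof: the paper also bounds $\mathrm{E}_{\mathcal D}\mathrm{E}_{\x}[\mathrm{TV}(p^0_{s|\x},\hat p_{s|\x})]\lesssim n_s^{-r/(d_x+1+2r)}\log^k n_s$ and then applies Markov's inequality to get the $1/\tau$ factor. The only difference is that the paper imports this expectation bound wholesale from Theorem~2 of \citet{tian2024enhancing}, obtaining it by integrating the tail bound $P(\mathrm{E}_{\x}\mathrm{TV}\ge a\delta_{n_s})\le\exp(-c_e n^{1-\xi}(a\delta_{n_s})^2)$; your Steps~1--3 instead sketch the standard Girsanov/approximation/estimation argument behind such a result, including the choice $K\asymp n_s^{(d_x+1)/(d_x+1+2r)}$ that the statement leaves implicit.
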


\begin{proof} Under Assumption \ref{A_d_c}, it follows from Theorem 2 of \cite{tian2024enhancing}
that
\begin{eqnarray*}
& P(\mathrm{E}_{\x}[\mathrm{TV}(p^0_{s|\x},\hat{p}_{s|\x})]\geq a\delta_{n_s})\leq \exp(-c_e n^{1-\xi} (a \delta_{n_s})^2),
\end{eqnarray*}
for any $a\geq 1$, some constant $c_e>0$ and a small $\xi>0$, with $\delta_{n_s}\asymp{n_s}^{-\frac{r}{d_x+1+2r}}\log^k {n_s}$. Hence,
$\mathrm{E}_{\mathcal{D}}\mathrm{E}_{\x}[\mathrm{TV}(p^0_{s|\x},\hat{p}_{s|\x})]\asymp {n_s}^{-\frac{r}{d_x+1+2r}}\log^k {n_s}$, where
$\mathrm{E}_{\mathcal{D}}$ denotes the expectation with respect to original data
$\mathcal{D}$. By Markov's inequality, the tail probability bound for $p^0_{s|\x},\hat{p}_{s|\x}$ is established:
\begin{eqnarray*}
P(\mathrm{TV}(p^0_{s|\x},\hat{p}_{s|\x})\geq \tau)& \leq & \frac{\mathrm{E}_{\mathcal{D}}\mathrm{E}_{\x}[\mathrm{TV}(p^0_{s|\x},\hat{p}_{s|\x})]}{\tau}\\
& \leq & \frac{c_0{n_s}^{-\frac{r}{d_x+1+2r}}\log^k {n_s}}{\tau}.
\end{eqnarray*}
This completes the proof.
\end{proof}

\section{Experiment details and additional results}
\label{exp-details}

\subsection{Experiment details}
This section outlines the modeling approaches used in the experiments described in Section 4.

In the first example, we adopt a gradient boosting regressor as the prediction model, trained on the same data and with identical hyperparameter settings as those in \cite{alaa2023conformalized} to ensure comparability. Across all five datasets, we apply the diffusion model settings listed in Table~\ref{tab:par_diff}, adapting only the conditioning dimensions to each dataset. The synthetic sample size is set as $m=1000$.

\change{From a computational perspective, the main overhead of GSI relative to CP or CUQR is the training and sampling of the conditional score generator. For tabular experiments, this generator operates on scalar scores and is therefore lightweight relative to the base predictive model; its memory footprint is driven mainly by the covariate dimension, hidden width, and batch size. In the LLM and image-captioning experiments, the pretrained predictor forward pass is common to all methods, so the incremental cost of GSI is the additional diffusion-model fitting on calibration scores, together with Monte Carlo sampling at test time. The architecture and hyperparameter search space reported below make these costs explicit and reproducible.}

The configurations for the second example are as follows:
\begin{itemize}
\item \textbf{Answer generation:} \texttt{LLaMA-3.1-8B-Instruct} model. The Llama-3.1-8B-Instruct model \cite{llama3modelcard} is used to generate answers and extract embeddings of dimension 4096 for the question text conditions.Specifically, we set \texttt{temperature=1.0}, \texttt{top\_p=0.9}, \texttt{top\_k=20}.The prompt to generate the answer is:
\begin{quote}
\ttfamily
Answer the following question in a single, brief but complete sentence.\\
Question: \textit{\{q\}}\\
Answer:
\end{quote}
\item \textbf{Diffusion model:} We use a conditional diffusion model that jointly encodes both time and contextual covariates. The model architecture is defined as follows:

\begin{itemize}
\item \textbf{Input:} A noisy scalar value $s \in \mathbb{R}$, conditional input of the text embedding $x \in \mathbb{R}^d$ with $d=4096$, and time step $t \in [1, T]$.
\item \textbf{Condition Encoding:} A multi-layer perceptron encodes the condition $x$ via
$
h_{\text{cond}} = f_{\text{cond}}(x) = \text{MLP}(x) \in \mathbb{R}^{h},
$
where $h$ is the hidden dimension.
\item \textbf{Time Embedding:} The timestep $t$ is embedded using a learned transformation:
$
h_t = f_{\text{time}}(t) = \text{MLP}(t) \in \mathbb{R}^{e},
$
where $e$ is the time embedding dimension.
\item \textbf{Diffusion Backbone:} The final input is the concatenation of $(x, h_{\text{cond}}, h_t)$, followed by residual blocks:
$
h = \text{ReLU}(W_1[x, h_{\text{cond}}, h_t] + b_1), \quad h = \text{ResBlock}^{(L)}(h).
$
\item \textbf{Output:} The model predicts the noise $\hat{\epsilon}$ using:
$\hat{\epsilon} = W_{\text{out}} h + b_{\text{out}}$.
\end{itemize}

\begin{table}[h]
\caption{Hyperparameter search space for diffusion model training.}
\label{tab:par_diff}
\centering
\scriptsize
\begin{tabular}{l l}
\toprule
\textbf{Hyperparameter} & \textbf{Search Space} \\
\midrule
\texttt{hidden\_dim} & \{64, 128, 256, 512, 1024\} \\
\texttt{time\_embed\_dim} & \{64, 128, 256, 512\} \\
\texttt{layers} & Integer in [3, 15] \\
\texttt{dropout\_p} & Float in [0.0, 0.05] \\
\texttt{learning\_rate} & Log-uniform in $[10^{-5}, 10^{-4}]$ \\
\texttt{noise\_steps} & Integer in [300, 1000] \\
\texttt{Timestep end} & Float in [0.01, 0.02] \\
\texttt{batch size} & \{32, 64, 128\} \\
\bottomrule
\end{tabular}
\end{table}

Each configuration is trained on the training set and evaluated on the validation set using the mean squared error between predicted and true noise vectors. The best-performing model under this objective is selected for downstream calibration and evaluation.

\item \textbf{XGBoost classifier in CA method:} To build a competitive baseline classifier, we employ the XGBoost algorithm with an extensive hyperparameter search using grid-based cross-validation. The following parameters are tuned via 5-fold cross-validation using accuracy as the evaluation metric:

\begin{table}[h]
\caption{Grid search space for XGBoost hyperparameters.}
\label{tab:xgb_param_grid}
\centering
\scriptsize
\begin{tabular}{ll}
\toprule
\textbf{Parameter} & \textbf{Search Space} \\
\midrule
\texttt{learning\_rate} & \{0.1, 0.2\} \\
\texttt{max\_depth} & \{3, 5\} \\
\texttt{min\_child\_weight} & \{3, 5\} \\
\texttt{subsample} & \{0.7, 1.0\} \\
\texttt{colsample\_bytree} & \{0.7, 1.0\} \\
\texttt{n\_estimators} & \{100, 200\} \\
\bottomrule
\end{tabular}
\end{table}

The XGBoost model is initialized with the objective set to \texttt{binary:logistic}, making it suitable for binary classification tasks. The evaluation metric during training is \texttt{mlogloss}.
The model with the best cross-validated performance is selected and used for the testing task.

\end{itemize}

In the third example, we employ the BLIPâ€“imageâ€“captioningâ€“base model to generate captions for images. We set \texttt{max\_length}=40 and \texttt{num\_beams}=10, while retaining default values for all other parameters. The diffusion model in our GSI framework uses the same configuration as in the first example, with the sole exception that its conditioning input is the 768 dimensional image embedding produced by the BLIP model. The synthetic sample size is set as $m=1000$.

\subsection{\change{More experiment results}}
\label{app:more-results}

We conducted an additional generator comparison on all five tabular datasets (\texttt{meps\_20}, \texttt{bio}, \texttt{blog\_data}, \texttt{kin8nm}, and \texttt{naval}) by comparing the diffusion-based GSI results in Table~1 of the main text with three conditional alternatives obtained by replacing the score generator with a conditional variational autoencoder (CVAE,\citet{sohn2015learning}), a conditional generative adversarial network (CGAN, \citet{mirza2014conditional}), or a conditional normalizing-flow-type model based on an affine conditional transformation (CFlow, \citet{winkler2019learning}). For a fair comparison, these alternative generators were implemented with similar multilayer-perceptron building blocks and tuned under a comparable architecture budget.

More specifically, all four generators are trained on the scalar nonconformity scores using the tabular covariates as conditions. The CVAE baseline uses an encoder that maps $(x,s)$ to a Gaussian latent variable and a decoder that maps $(x,z)$ back to the score. The CGAN baseline uses a conditional generator $G(x,z)$ together with a conditional discriminator $D(x,s)$, and training combines an adversarial loss with an $\ell_1$ reconstruction term. The conditional-flow baseline is a conditional affine flow: it learns $\mu(x)$ and $\log \sigma(x)$ and assumes
\[
s \mid x = \mu(x) + \sigma(x)\varepsilon, \qquad \varepsilon \sim N(0,1),
\]
so its likelihood is that of a condition-dependent Gaussian after an affine transformation.

Hyperparameters are tuned separately for each method on the same training/validation split. For the shared architecture and optimization parameters, \texttt{hidden\_dim}, \texttt{layers}, \texttt{dropout\_p}, \texttt{learning\_rate}, and \texttt{batch\_size} use the same search spaces as those in Table~\ref{tab:par_diff}. The remaining method-specific tuning parameters are reported in Table~\ref{tab:generator_tuning_details}. Hyperparameter selection is based on the validation loss corresponding to each model family: denoising mean-squared error for diffusion, reconstruction plus KL loss for the CVAE, adversarial plus reconstruction loss for the CGAN, and negative log-likelihood for the CFlow.

\begin{table}[h]
\centering
\caption{Generator-specific tuning parameters. The shared parameters \texttt{hidden\_dim}, \texttt{layers}, \texttt{dropout\_p}, \texttt{learning\_rate}, and \texttt{batch\_size} use the same search spaces as those in Table~\ref{tab:par_diff}.}
\label{tab:generator_tuning_details}
\scriptsize
\begin{tabular}{p{3.0cm}p{8.5cm}}
\toprule
Method & Additional tuning parameters \\
\midrule
CVAE & \texttt{latent\_dim} $\in \{4,8,16\}$, \texttt{beta\_kl} $\in \{0.001,0.01,0.05\}$. \\
CGAN & \texttt{latent\_dim} $\in \{4,8,16\}$, \texttt{recon\_weight} $\in \{5,10,20\}$. \\
CFlow & No additional method-specific parameter beyond the shared  parameters. \\
\bottomrule
\end{tabular}
\end{table}

\begin{table}[htbp]
\centering
\caption{Additional generator comparison across all five tabular datasets. }
\label{tab:generator_compare_full}
\scriptsize
{%
\begin{tabular}{llccccc}
\toprule
Dataset & Method & $C_{\text{marg}}$ &  $C_{G}$ & $L_{\text{avg}}$ & Train time (s) & Inference time (s) \\
\midrule
\multirow{4}{*}{Meps\_20}
& Conditional Diffusion & 0.91 & 0.80 & 0.98 & 214.66 & 230.03 \\
& Conditional VAE       & 0.88 & 0.60 & 0.83 & 20.71  & 0.05   \\
& Conditional GAN       & 0.77 & 0.00 & 0.55 & 42.06  & 0.06   \\
& Conditional flow      & 0.91 & 0.40 & 0.96 & 17.78  & 0.11   \\
\midrule
\multirow{4}{*}{Bio}
& Conditional Diffusion & 0.89 & 0.88 & 2.19 & 203.29 & 106.51 \\
& Conditional VAE       & 0.90 & 0.88 & 2.38 & 140.92 & 0.16   \\
& Conditional GAN       & 0.75 & 0.70 & 1.63 & 70.83  & 0.07   \\
& Conditional flow      & 0.91 & 0.89 & 2.10 & 53.15  & 0.13   \\
\midrule
\multirow{4}{*}{Kin8nm}
& Conditional Diffusion & 0.90 & 0.85 & 1.82 & 65.40  & 70.82  \\
& Conditional VAE       & 0.90 & 0.83 & 2.17 & 23.67  & 0.02   \\
& Conditional GAN       & 0.59 & 0.50 & 1.20 & 19.67  & 0.03   \\
& Conditional flow      & 0.90 & 0.84 & 1.99 & 8.62   & 0.02   \\
\midrule
\multirow{4}{*}{Naval}
& Conditional Diffusion & 0.89 & 0.85 & 0.96 & 113.33 & 69.17  \\
& Conditional VAE       & 0.90 & 0.83 & 1.21 & 17.20  & 0.03   \\
& Conditional GAN       & 0.64 & 0.53 & 0.59 & 8.78   & 0.03   \\
& Conditional flow      & 0.91 & 0.86 & 0.88 & 38.37  & 0.04   \\
\midrule
\multirow{4}{*}{Blog}
& Conditional Diffusion & 0.90 & 0.80 & 1.66 & 194.03 & 361.46 \\
& Conditional VAE       & 0.91 & 0.60 & 1.80 & 141.84 & 0.17   \\
& Conditional GAN       & 0.65 & 0.47 & 0.72 & 26.73  & 0.10   \\
& Conditional flow      & 0.90 & 0.80 & 1.61 & 19.60  & 0.28   \\
\bottomrule
\end{tabular}%
}
\end{table}

\begin{table}[htbp]
\centering
\caption{Training and inference time comparison across tabular, Question\&Answer, and Image Captioning tasks. For the tabular tasks, we report the diffusion-based GSI timing together with the mean training and test time of the new TCP and CP runs. Entries smaller than 0.01 seconds are replaced with ``--'' and treated as negligible.}
\label{tab:time_compare}
\scriptsize
{%
\begin{tabular}{lllcc}
\toprule
Task & Dataset & Method & Train time (s) & Inference time (s) \\
\midrule
\multirow{15}{*}{Tabular Prediction}
& \multirow{3}{*}{Meps\_20} & GSI  & 214.66 & 230.03 \\
&                         & CP   & --     & --     \\
&                         & CUQR & 0.02   & 4.56   \\
\cmidrule(lr){2-5}
& \multirow{3}{*}{Bio}    & GSI  & 203.29 & 106.51 \\
&                         & CP   & --     & --     \\
&                         & CUQR & 0.03   & 4.71   \\
\cmidrule(lr){2-5}
& \multirow{3}{*}{Kin8nm} & GSI  & 65.40  & 70.82  \\
&                         & CP   & --     & --     \\
&                         & CUQR & 0.02   & 0.81   \\
\cmidrule(lr){2-5}
& \multirow{3}{*}{Naval}  & GSI  & 113.33 & 69.17  \\
&                         & CP   & --     & --     \\
&                         & CUQR & 0.01   & 1.28   \\
\cmidrule(lr){2-5}
& \multirow{3}{*}{Blog} & GSI  & 194.03 & 361.46 \\
&                             & CP   & --     & 0.01   \\
&                             & CUQR & 0.01   & 48.03  \\
\midrule
\multirow{3}{*}{Question\&Answer}
& \multirow{3}{*}{WikiQA} & GSI & 51.24  & 67.49    \\
&                         & CA  & 44.76  & --    \\
&                         & SE  & 843.52 & 14400.18 \\
\midrule
\multirow{2}{*}{Image Captioning}
& \multirow{2}{*}{COCO}   & GSI & 128.17 & 333.21 \\
&                         & CA  & 69.81  & --  \\
\bottomrule
\end{tabular}%
}
\end{table}

Table~\ref{tab:generator_compare_full} reports the resulting interval quality and computational summaries for every dataset-method pair.
Across the five datasets, the diffusion-based generator and the conditional flow alternative achieve similar marginal coverage overall, but diffusion yields stronger subgroup-coverage performance and remains more stable under the joint criteria of marginal coverage, subgroup robustness, and interval efficiency. The CVAE is competitive on several datasets but typically produces longer intervals, whereas the CGAN consistently undercovers despite shorter intervals. Computationally, diffusion is more expensive than the alternative generators, especially in the multimodal tasks and in the test-time Monte Carlo sampling stage, while the alternative generators are substantially cheaper. These results support the use of diffusion as a robust default generator while clarifying the tradeoff against lighter conditional generators.

Table~\ref{tab:time_compare} summarizes the training and inference time for the tabular, Q\&A, and Image Captioning tasks. For the tabular tasks, we report the timing of diffusion-based GSI from Table~\ref{tab:generator_compare_full}, together with the training and test time of the CP and CUQR runs. The computational cost of CP is essentially negligible, since it only requires a simple quantile computation on calibration scores. The test-time cost of CUQR is somewhat higher, but it remains below that of diffusion-based GSI. Although GSI, especially with a diffusion backbone, incurs higher training and inference costs, it offers greater flexibility once the conditional score generator has been trained: in our implementation, after synthetic scores are generated, uncertainty levels can be adjusted without repeatedly fitting new quantile-specific models.

For the multimodal tasks, relative to the CA method, whose inference cost is negligible because it only requires a forward pass through a discriminator model, diffusion-based GSI has a higher inference cost because sample generation requires running a reverse-time denoising procedure. Nevertheless, its inference cost remains much smaller than that of the SE method, which must generate a collection of responses for each query at test time. We further note that the computational cost of diffusion-based GSI can be reduced by using faster samplers, such as DPM-Solver methods \citep{lu2022dpm,lu2025dpm}, which are also of independent interest in the broader study of accelerated diffusion-based generation.

\end{document}